\documentclass{article} 
\usepackage{iclr2027_conference,times}

\usepackage{hyperref}
\usepackage{url}
\usepackage{lipsum}		
\usepackage{graphicx}
\usepackage{doi}
\usepackage{booktabs}
\usepackage{enumitem}
\usepackage{float}
\usepackage{pifont}
\newcommand{\cmark}{\ding{51}}
\newcommand{\xmark}{\ding{55}}
\usepackage{epsfig}
\usepackage{amsmath}
\usepackage{amssymb}
\usepackage{amsthm}
\usepackage{xcolor}
\usepackage{algorithm}
\usepackage{algorithmic}
\usepackage{subcaption}

\newtheorem{lemma}{Lemma}
\newtheorem{proposition}{Proposition}
\newtheorem{definition}{Definition}

\newcommand{\STAMP}{\textsc{STAMP}}
\newcommand{\STAMPTS}{\textsc{STAMP-TS}}
\newcommand{\rhosp}{\rho_{\mathrm{s}}}

\usepackage{placeins}
\FloatBarrier

\usepackage{titlesec}
\titlespacing*{\subsection}{0pt}{6pt plus 1pt minus 1pt}{2pt plus 1pt}
\titlespacing*{\subsubsection}{0pt}{4pt plus 1pt minus 1pt}{1pt plus 1pt}
\titlespacing*{\paragraph}{0pt}{2pt plus 1pt minus 1pt}{1em}

\definecolor{Wgreen}{HTML}{01a049}

\newcommand{\warn}[1]{}

\title{STAMP: Predicting Out-of-Distribution Generalization without Target Data}

\author{%
  Md Kawsher Mahbub\\
  Department of Computer Science and Engineering\\
  NPI University of Bangladesh\\
  Manikganj, Bangladesh \\
  \texttt{kawsher@npiub.edu.bd}\\
  \And
  Milon Biswas \\
  Department of Computer \& Information Sciences\\
  Towson University \\
  Towson, MD 21252 \\
  \texttt{mbiswas1@students.towson.edu}\\
}
\iclrfinalcopy 

\begin{document}

\maketitle

\begin{abstract}
Predicting whether a trained model will generalize under distribution shift remains difficult, especially when target-domain data are unavailable. We introduce \STAMP{} (\textbf{S}emantic \textbf{T}emporal \textbf{A}ugmented \textbf{M}odel \textbf{P}rediction), a source-only, target-label-free criterion that estimates out-of-distribution~(OOD) performance from paired source-domain images. \STAMP{} computes the output-space correlation ratio $\eta^2=S_B/S_T$ by contrasting semantically stable pairs with random pairs: higher $\eta^2$ indicates that model outputs vary with semantic identity rather than nuisance variation. On 44 chest X-ray models spanning CNNs, ViTs, MetaFormers, foundation models, and SSL/VLM probes, temporal \STAMP{} attains Spearman correlations of $0.844$--$0.855$ with macro AUROC on VinDr-CXR, CheXpert, and MIMIC-CXR; a class-matched variant improves single-class RSNA from $0.311$ to $0.663$. \STAMP{} attains the best average source-only medical ranking and outperforms the target-domain ATC and AoTL estimators without any target data. On 27 ImageNet models, temperature-scaled \STAMPTS{} attains $\rho{=}0.984$ on ObjectNet and $\rho\geq0.905$ on four additional distribution shifts, with partial correlations of $0.662$--$0.949$ after controlling for ImageNet accuracy. Requiring approximately 12\,seconds per model on one GPU, \STAMP{} is a practical pre-deployment model-selection and auditing tool.
\end{abstract}

\section{Introduction}\label{sec:intro}

Models with strong in-distribution accuracy can fail under distribution shift by relying on shortcuts, scanner artifacts, acquisition protocols, or nuisance features rather than pathology \citep{zech2018variable,pooch2020can,nguyen2022vindr,geirhos2020shortcut,sagawa2020distributionally}. Analogous effects appear in natural-image recognition, where ImageNet-accurate models differ systematically on ObjectNet and common corruptions \citep{hendrycks2021many,barbu2019objectnet}. This motivates \emph{pre-deployment OOD prediction}: given a trained model and source-domain data only, rank candidate models by expected performance on unseen target distributions. Prior methods relax this in different ways confidence methods need unlabeled target data \citep{garg2022leveraging,guillory2021predicting}, agreement methods require joint evaluation of multiple models \citep{baek2022agreementontheline}, source-only proxies often require retraining or model weights \citep{jiang2019predicting,yu2022predicting} and Accuracy-on-the-Line needs labeled target data for calibration \citep{miller2021accuracy}. Thus, no existing method directly yields a target-free, model-agnostic ranking from source data alone.

We propose \STAMP{} (\textbf{S}emantic \textbf{T}emporal \textbf{A}ugmented \textbf{M}odel \textbf{P}rediction), based on the principle that \emph{a robust model should respond consistently to semantically equivalent inputs}. We quantify this with the correlation ratio
$$
\eta^2=\frac{S_B}{S_T},
$$
the fraction of output variance explained by semantic groups, estimated from semantically stable image pairs longitudinal radiographs from the same patient or images sharing an ImageNet class contrasted with random pairs. High $\eta^2$ means outputs are organized by semantic identity rather than nuisance variation. \STAMP{} uses only source-domain images, evaluates one model at a time, and needs no internals or target data. Across 9 benchmarks, 71 models, and 9 shift types spanning medical and natural images, \STAMP{} strongly predicts OOD performance. Spearman $\rhosp\geq0.844$ with macro-AUROC across 44 chest X-ray architectures on four external hospitals, and a temperature-scaled variant reaches $\rhosp=0.984$ on ObjectNet across 27 ImageNet-pretrained models. All predictions are source-only and take about 12s per model on one GPU.

Our contributions are:
\begin{enumerate}
    \item \textbf{A target-data-free OOD ranking criterion.} \STAMP{} ranks models from paired source images without target samples, internals, retraining, or target labels.
    \item \textbf{A theoretically grounded output statistic.} We prove consistency of the class-conditional correlation-ratio estimator and derive an OOD error bound under bounded covariate shift, while identifying where the longitudinal medical estimator needs empirical rather than purely i.i.d.\ assumptions.
    \item \textbf{Broad empirical validation.} We evaluate 71 models over nine medical and natural-image OOD benchmarks, reporting rank correlations, partial correlations, confidence intervals, permutation tests, leave-one-out stability, and failure analyses.
    \item \textbf{A reproducible pretraining-objective effect.} \STAMP{} completely separates CE-supervised and frozen SSL/VLM probes (Cliff's $\delta{=}1.0$, $n{=}43$), showing training objective strongly moderates source-only OOD predictability.
\end{enumerate}

\section{Related Work}\label{sec:related}

\paragraph{Target-data OOD prediction.}
DoC \citep{guillory2021predicting} and ATC \citep{garg2022leveraging} reduce MAE by $2$--$4\times$ over prior confidence methods; model agreement also linearly predicts ID/OOD accuracy \citep{baek2022agreementontheline}. These require target-domain data, whereas \STAMP{} is source-only.

\paragraph{Source-only and internal diagnostics.}
Accuracy-on-the-Line links ID and OOD accuracy and motivates our partial correlation analysis \citep{miller2021accuracy}. PAC-Bayes flatness needs weights and extra training \citep{jiang2019predicting}, Feature Discriminability uses rotation-prediction probes on target images \citep{deng2021labels} and Projection Norm retrains with pseudo-labeled target data \citep{yu2022predicting}. Inside-Out uses inter-layer dependency bias via circuit discovery, averaging SRCC $=0.766$ on PACS, Camelyon17, and Terra Incognita, but requires transformer graphs and ViTs \citep{peng2026inside}. \STAMP{} is architecture-agnostic and reaches SRCC $=0.855$ on VinDr, versus DDB\textsubscript{out} SRCC $=0.820$ on Camelyon17. TopoGeoScore \citep{hazratian2026topogeoscore} and checkpoint selection via manifold curvature/torsion \citep{anon2026geordiag} use internal representations and multi-signal geometric scorers. \STAMP{} instead uses output probability vectors from paired inputs with no learned scorer.

\paragraph{Shortcut learning, medical shift, and correlation ratio.}
Deep nets exploit non-semantic shortcuts \citep{geirhos2020shortcut}, ERM fails under spurious subpopulation shifts \citep{sagawa2020distributionally}, and chest X-ray classifiers degrade across hospitals and external sites \citep{zech2018variable,pooch2020can,rajpurkar2021chexternal}, motivating standardized multi-site radiograph benchmarks \citep{cohen2022torchxrayvision}. The correlation ratio $\eta^2=S_B/S_T$ measures variance explained by group membership \citep{fisher1936use}; \citet{deng2021labels} use a related idea in feature space, while \STAMP{} applies it to output probabilities for black-box, architecture-agnostic evaluation. Table~\ref{tab:proxy_comparison} (Appendix~\ref{app:baselines}) summarizes theoretical properties. \STAMP{} is the only label-free, target-data-free, architecture-agnostic proxy with a closed-form generalization bound, computable in $O(n)$ forward passes without gradients.

\section{Our Approach}\label{sec:method}

\subsection{Problem Setup}

Let $f_\theta : \mathcal{X} \to \Delta^{C-1}$ be a classifier mapping images to probability vectors over $C$ classes, trained on source distribution $P_s$. We seek a scalar $g(f_\theta)$, computable from $P_s$ alone, that is monotonically predictive of accuracy under a target distribution $P_t \neq P_s$.

We consider two instantiations. \textbf{Medical imaging:} $C{=}14$ NIH chest pathology labels; outputs are sigmoid probabilities; pairs are longitudinal radiograph sequences from the same patient. \textbf{Natural images:} $C{=}1{,}000$ ImageNet classes; outputs are temperature scaled softmax vectors; pairs are same class ImageNet validation images.

\subsection{The \STAMP{} Score}

\begin{definition}[Semantic and Random Pairs]
A \emph{semantic pair} $(\mathbf{x}_a, \mathbf{x}_b)$ consists of two
images sharing the same semantic identity (same patient or same class
label). A \emph{random pair} $(\mathbf{x}_c, \mathbf{x}_d)$ consists
of images drawn independently from $P_s$ (different patients or
different classes).
\end{definition}

\begin{definition}[\STAMP{}]
\label{def:stamp}
Given $N$ semantic pairs and $N$ random pairs drawn from $P_s$:
\begin{align}
\mathrm{SV} &= \tfrac{1}{N}\textstyle\sum_{i=1}^{N}
   \bigl\|f_\theta(\mathbf{x}_a^{(i)}) -
           f_\theta(\mathbf{x}_b^{(i)})\bigr\|_2^2, \\
\mathrm{AV} &= \tfrac{1}{N}\textstyle\sum_{j=1}^{N}
   \bigl\|f_\theta(\mathbf{x}_c^{(j)}) -
           f_\theta(\mathbf{x}_d^{(j)})\bigr\|_2^2, \\
\STAMP(f_\theta) &= 1 - \frac{\mathrm{SV}}{\mathrm{AV}+\varepsilon},
\end{align}
where $\varepsilon{=}10^{-8}$ prevents division by zero.
\end{definition}

\STAMP{} equals zero when semantic and random pairs induce identical output divergence, and approaches one when within-class output variance is negligible compared to total variance. Lemma~\ref{lem:fdr} establishes that \STAMP{} consistently estimates $\eta^2 = S_B/S_T$; the proof is given in Appendix~\ref{app:theory}.

\subsection{Temperature Scaling for Natural Image Models}

ImageNet models produce logits with architecturally variable scales, particularly in SSL models where logit magnitude is unregularized. We apply temperature scaling~\cite{guo2017calibration}: $\hat{p} = \mathrm{softmax}(z/T)$, where $T$ is learned by minimizing Negative Log-Likelihood (NLL) on 2{,}000 held-out ImageNet validation images disjoint from the pair pool, preventing data leakage. Models whose calibrated temperature $T > 3.0$ have non-functional classifier heads  (Algorithm \ref{algo:temparscale}) (IN-1K accuracy $\leq 0.002$, empirically verified); we exclude them and report them as an SSL ablation (Section~\ref{sec:ablation}). The retained and excluded models are separated by a wide calibration gap (retained maximum $T{=}1.37$ versus excluded minimum $T{=}5.76$), with no borderline cases.

\begin{algorithm}[!ht]
\caption{\STAMPTS{} Computation}
\begin{algorithmic}[1]\label{algo:temparscale}
\REQUIRE Model $f_\theta$, source val set $\mathcal{D}$,
  $N$ semantic pairs $\mathcal{P}_s$, $N$ random pairs $\mathcal{P}_r$
\STATE $T \leftarrow \mathrm{calibrate}(f_\theta,\, \mathcal{D}\setminus\text{pair pool})$
\IF{$T > 3.0$} \STATE \textbf{exclude} (non-functional head) \ENDIF
\STATE $\mathrm{SV} \leftarrow 0$;\; $\mathrm{AV} \leftarrow 0$
\FOR{$(\mathbf{x}_a, \mathbf{x}_b) \in \mathcal{P}_s$}
  \STATE $\mathrm{SV} \mathrel{+}= \|\mathrm{softmax}(f_\theta(\mathbf{x}_a)/T)
         - \mathrm{softmax}(f_\theta(\mathbf{x}_b)/T)\|_2^2 / N$
\ENDFOR
\FOR{$(\mathbf{x}_c, \mathbf{x}_d) \in \mathcal{P}_r$}
  \STATE $\mathrm{AV} \mathrel{+}= \|\mathrm{softmax}(f_\theta(\mathbf{x}_c)/T)
         - \mathrm{softmax}(f_\theta(\mathbf{x}_d)/T)\|_2^2 / N$
\ENDFOR
\RETURN $1 - \mathrm{SV} / (\mathrm{AV} + \varepsilon)$
\end{algorithmic}
\end{algorithm}

\subsection{Pair Construction}

\paragraph{Medical imaging.}
For all headline medical results, we use temporal \STAMP{} (\STAMP-T): consecutive radiographs from the same patient with follow-up gap $\leq 3$, regardless of whether the recorded NIH label changes. This design avoids assuming that NIH labels are reliable and tests stability under realistic within-patient acquisition variation. The temporal pool contains 13{,}302 eligible pairs, from which we sample $N{=}2{,}000$. A label-matched variant (\STAMP-L), restricted to pairs with identical Finding Labels, contains 9{,}927 eligible pairs and is used only in the ablation study. $N{=}2{,}000$ random cross-patient pairs complete the pair set.

\paragraph{Natural images.}
Semantic pairs: two images sampled from the same ImageNet class, drawn uniformly over all 1{,}000 classes. Random pairs: two images from different classes. $N{=}2{,}000$ each, seeded for reproducibility, cached across benchmarks.

\section{Theoretical Analysis}\label{sec:theory}

We formalize when \STAMP{} can predict OOD performance from source data alone. Throughout, $\rho$ denotes Spearman rank correlation, and $P_s(\mathbf{x}\mid y)$ and $P_s(y\mid\mathbf{x})$ denote the source class-conditional distribution and class-posterior, respectively. We assume: \textbf{A1} label-preserving covariate shift, $P_t(y\mid\mathbf{x})=P_s(y\mid\mathbf{x})$; \textbf{A2} bounded shift, $W_2(P_s,P_t)\le d_{\max}$; \textbf{A3} an $L$-Lipschitz output map, $\|J_f(\mathbf{x})\|_{\mathrm{op}}\le L$  a.e.; and \textbf{A4} OOD error increases with projected within-class variance $\sigma*{y,\mathrm{proj}}^2$ and decreases with projected inter-class margin $\gamma_y$. The empirical identifiability conditions \textbf{A5(i)--(iii)} are verified post-hoc in Tables~\ref{tab:ablation}, \ref{tab:identifiability}, and~\ref{tab:identifiability}; full definitions and proofs are given in Appendix~\ref{app:theory}.

\paragraph{Consistency.}
Let $S_W$, $S_B$, and $S_T$ denote within-class, between-class, and total output scatter, with $S_T=S_W+S_B$. Under i.i.d.\ stable and random pair sampling and $\mathrm{Var}(f_\theta(\mathbf{x}))>0$,

\begin{equation}
\STAMP_n
\;\xrightarrow{\mathrm{a.s.}}\;
\eta^2
= \frac{S_B}{S_T}
= \frac{
    \mathrm{Var}_y\!\left[
      \mathbb{E}_{\mathbf{x}\sim P_s(\cdot\mid y)}
        f_\theta(\mathbf{x})
    \right]
  }{
    \mathrm{Var}_{\mathbf{x}\sim P_s}
      [f_\theta(\mathbf{x})]
  },
\label{eq:fdr}
\end{equation}

and $\STAMP_n-\eta^2=O_p(n^{-1/2})$ (Lemma~\ref{lem:fdr}). Here $\eta^2\in[0,1]$ is the correlation ratio~\citep{fisher1936use}, not the Fisher criterion $S_B/S_W\in[0,\infty)$. Thus \STAMP{} jointly favors larger between-class variation and smaller within-class variation, while normalization prevents output scale from inflating either component. Empirically, $-\mathrm{SV}$, $\mathrm{AV}$, and full \STAMP{} yield $\rho=-0.19$, $+0.64$, and $+0.89$, respectively, on VinDr (Table~\ref{tab:ablation}).

\paragraph{OOD error bound.}
For class $y$, let $y^*$ denote the class with the nearest competing centroid, and define the corresponding unit direction $\hat{\mathbf v}_y=(\mu_y-\mu_{y^*})/\|\mu_y-\mu_{y^*}\|_2$. The projected output is $Z_y=\hat{\mathbf v}_y^\top f_\theta(\mathbf{x})$. Under A1--A4, the centroid alignment condition A5$'$ (defined in Appendix~\ref{app:setup}) and bounded within-class shift A6, 

\begin{equation}
P_t(\mathrm{error}\mid y)
\le
\frac{4(\sigma*{y,\mathrm{proj}}^2+B_y)}{\gamma_y^2}
+
\frac{2Ld_{\max}}{\gamma_y},
\qquad
B_y\le4Ld_{\max}.
\label{eq:margin}
\end{equation}
The result follows from a source Chebyshev bound, the exact Lipschitz control of target variance and centroid displacement, and requires no Taylor approximation (Lemma~\ref{lem:margin}). Since

$$
\frac{S_W}{S_B}
=
\frac{1-\STAMP}{\STAMP},
$$

class averaging gives, in the source-variance-dominated regime,
\begin{equation}
\bar e(f_\theta)
\approx
4\frac{1-\STAMP}{\STAMP}
+
\frac{2Ld_{\max}}{\bar\gamma},
\label{eq:avg_bound}
\end{equation}
with bounded relative approximation error $\le\mathrm{CV}^2$, where $\mathrm{CV}=1.07$ across 44 models (Lemma~\ref{lem:approx}). A Hoeffding tightening further gives
\begin{equation}
P_t(\mathrm{error}\mid y)
\le
\exp!\left(-\frac{\gamma_y^2}{8}\right)
+
\frac{2Ld_{\max}}{\gamma_y}.
\label{eq:hoeffding}
\end{equation}
These results connect the source-only output statistic to OOD error under bounded covariate shift. They are also consistent with the standard
$\mathcal H\Delta\mathcal H$ framework~\citep{ben2010theory}, for which A3 gives
$d_{\mathcal H\Delta\mathcal H}\le2LW_2(P_s,P_t)$~\citep{redko2017theoretical,shen2018wasserstein}.

\paragraph{Ranking guarantee.}
Because

$$
h(\STAMP)=\frac{1-\STAMP}{\STAMP}
$$

is strictly decreasing on $(0,1]$, higher \STAMP{} produces a strictly smaller variance term in~\eqref{eq:avg_bound}. Consequently, for models $f_1,f_2$ with $\STAMP(f_1)>\STAMP(f_2)$,
\begin{equation}
\mathbb{E}[\bar e_T(f_1)]
\le
\mathbb{E}[\bar e_T(f_2)]
+
\Delta_{\mathrm{shift}}(f_1,f_2),
\label{eq:endtoend}
\end{equation}
where

$$
\Delta_{\mathrm{shift}}
=
\frac{8Ld_{\max}}
{\min(\bar\gamma(f_1),\bar\gamma(f_2))}.
$$

Thus the ordering induced by the variance term is exact algebraically, while the full OOD ranking additionally requires A5(ii), namely that the inter-class margin is non-decreasing with \STAMP{} rank. This condition is independently measured from source outputs, with Spearman $\rho=+0.73$, $p<0.001$ across 40 models (Table~\ref{tab:identifiability}). The shift penalty vanishes as $d_{\max}\to0$ or $\bar\gamma\to\infty$ (Proposition~\ref{prop:endtoend}).

\paragraph{Rank consistency and failure modes.}
Under A1--A5, higher \STAMP{} therefore predicts higher target accuracy, with the remaining discrepancy arising from class heterogeneity, calibration, and the looseness of the concentration bound. Empirically, $\rho\ge0.844$ across three medical OOD datasets ($n=44$, $p<10^{-10}$) and $\rho\ge0.905$ across five natural-image benchmarks ($n=27$, permutation $p\le0.0001$). The theory also predicts three failure modes: \textbf{(i) class mismatch}, where macro \STAMP{} on RSNA gives $\rho=0.311$ but class-matched Pneumonia \STAMP{} recovers $\rho=0.663$ ($p<10^{-4}$); \textbf{(ii) concept drift}, which violates A1; and \textbf{(iii) degenerate outputs}, where $S_T\approx0$.

The framework further predicts that \STAMP{} becomes more informative as source accuracy ceases to explain OOD variation. The partial correlation
$\rho(\STAMP,\mathrm{acc}_t\mid\mathrm{acc}_s)$ is therefore expected to increase with shift magnitude. We state this as a structural heuristic rather than a formal monotonicity theorem; empirically it follows the sequence $0.935\rightarrow0.804\rightarrow0.755\rightarrow0.700\rightarrow0.662$ from ObjectNet to Dollar Street, ImageNet-A, ImageNet-R, and ImageNet-C (Table~\ref{tab:natural}).

\paragraph{Finite-sample reliability.}
Because model outputs lie on the probability simplex, paired squared distances are bounded, and Hoeffding concentration combined with denominator control gives $O_p(n^{-1/2})$ estimation error. Rather than reporting an extremely conservative dimension-free bound, we evaluate stability directly: across five independent pair samples, the standard deviation of \STAMP{} is below $0.004$, and pair-count sensitivity analyses show the model ranking is unchanged for $N\in\{500,1000,2000,5000\}$ (Appendix~\ref{app:robustness}).

\section{Experimental Setup}\label{sec:experiments}

\subsection{Medical Imaging}

\paragraph{Source data.}
NIH ChestX-ray14~\citep{wang2017chestxray}: 112{,}120 frontal radiographs, 30{,}805 patients, 14 binary disease labels.

\paragraph{Models.}

44 architectures across 10 families (Appendix Table~\ref{tab:stamp_main}): ResNet-\{18,34,50,101,152\}~\cite{he2016deep}, DenseNet-\{121,169,201\}~\cite{huang2017densely}, EfficientNet-\{B0,B4,V2-S\}~\cite{tan2019efficientnet}, ConvNeXt-\{T,S,B\}~\cite{liu2022convnet}, attention ResNets (CBAM, SE, BAM), ViT-\{Ti,S,B\}/16~\cite{dosovitskiy2021image}, DeiT-B/16~\cite{touvron2021training}, Swin-\{T,S,B\}~\cite{liu2021swin}, CoAtNet-0~\cite{dai2021coatnet}, MaxViT-T~\cite{tu2022maxvit}, MViTv2-T~\cite{li2022mvitv2}, MetaFormers (ConvFormer-S18, CaFormer-S18)~\cite{yu2023metaformer}, MLP-Mixer-B/16~\cite{tolstikhin2021mlp}, RAD-DINO~\cite{perez2025exploring}, 4 TorchXRayVision DenseNets~\cite{cohen2022torchxrayvision}, and 8~SSL/VLM probes (DINOv2-B/14, MAE-B/16, DINO-B/16, SAM-B/16, CLIP-B/16, BiomedCLIP-B/16, MoCov3-B/16, PubMedCLIP-B/32). 31~custom models use a two-stage protocol (frozen-backbone then full fine-tuning) with AdamW, cosine annealing, differential learning rates, label smoothing, and binary cross-entropy. Unless stated otherwise, all correlations in Section~\ref{sec:results} are computed across the full population of $n{=}44$ models.

\paragraph{OOD benchmarks.}

\textbf{VinDr-CXR}~\cite{nguyen2022vindr}: 18{,}000 Vietnamese chest radiographs; unanimous-agreement labels (5{,}685 evaluable images, 9~mapped classes). \textbf{CheXpert}~\cite{irvin2019chexpert}: Stanford radiographs, 7~mapped classes, uncertain labels set to 0. \textbf{RSNA Pneumonia}~\cite{shih2019rsna}: single-class detection. \textbf{MIMIC-CXR}~\cite{johnson2019mimic}: 2{,}761 test images, 6~shared NIH classes.

\subsection{Natural Image Benchmarks}

\paragraph{Models.}

27 ImageNet models (after SSL exclusion) from \texttt{timm}: ResNets, DenseNet-121, EfficientNets, ConvNeXts, ViTs, Swin Transformers, DeiTs, MLP-Mixer, ConvFormer, CaFormer, RegNetY, CoAtNet-0, NFNet-L0, ViT-L.

\paragraph{OOD benchmarks.}

\textbf{ObjectNet}~\cite{barbu2019objectnet}: 113-class viewpoint/background shift. \textbf{Dollar Street}~\cite{rojas2022dollar}: geographic/income shift, 1{,}600 images. \textbf{ImageNet-A}~\cite{hendrycks2021natural}: 7{,}500 naturally adversarial examples, 200~classes. \textbf{ImageNet-R}~\cite{hendrycks2021many}: 30{,}000 artistic renditions, 200~classes. \textbf{ImageNet-C}~\cite{hendrycks2019robustness}: 15 corruption types $\times$ 5 severities = 75~slices.

\paragraph{Statistical protocol.}

All Spearman correlations use two-sided permutation tests ($n_{\mathrm{perm}}{=}10{,}000$). Confidence intervals use Fisher $z$-transform and paired bootstrap ($n{=}10{,}000$). Partial Spearman uses the Kendall~(1948) formula. Leave-one-out stability reports range over all 27~drop one experiments. At $n{=}27$ and $n{=}44$, statistical power exceeds
$0.966$ and $0.997$ respectively for any true $\rhosp \ge 0.7$ (Table~\ref{tab:statistical},
Appendix~\ref{app:power}), ruling out under-powered false positives for all reported correlations.

\subsection{Baselines}

\textit{Source-only (21 metrics):} Entropy, Mean Confidence, ECE, Prediction Std, MC~Dropout, TTA~Flip, TTA~Multi, Embedding Instability, GradCAM Dice, FD~\cite{deng2021labels}, Feature Alignment Score, Aug-\STAMP{}-\{flip,rotation,brightness,contrast,crop\}, Synthetic~\STAMP{}. \textit{Target-domain:} ATC~\cite{garg2022leveraging}, AoTL~\cite{baek2022agreementontheline}. \textit{Labeled reference:} NIH source AUROC, Accuracy-on-the-Line~\cite{miller2021accuracy}.

\section{Results}\label{sec:results}

\subsection{Medical Imaging: 44-Model Study}\label{sec:nih_results}

Table~\ref{tab:main_results} and Figure\ref{fig:mainscatter} reports Spearman $\rhosp$ between \STAMP{} and OOD macro-AUROC across the full $n{=}44$ population. \STAMP{} achieves $\rhosp \geq 0.844$ on all datasets with adequate class overlap (VinDr, CheXpert, MIMIC), with all permutation $p{<}10^{-10}$; RSNA is the one benchmark with a known class-overlap mismatch (Section~\ref{sec:failure}, failure analysis).

\begin{table}[!ht]
\centering
\caption{Spearman $\rhosp$ between \STAMP{} and macro-AUROC ($n{=}44$).
RSNA~(CM) uses class-matched Pneumonia-only \STAMP{}.}
\label{tab:main_results}
\small
\setlength{\tabcolsep}{5pt}
\begin{tabular}{lcccc}
\toprule
\textbf{Dataset} & \textbf{$\rhosp$} & \textbf{95\,\% CI} & $p$ & $n$ \\
\midrule
VinDr-CXR        & 0.855 & [0.72, 0.92] & $<10^{-10}$ & 44 \\
CheXpert         & 0.851 & [0.71, 0.92] & $<10^{-10}$ & 44 \\
MIMIC-CXR        & 0.844 & [0.70, 0.91] & $<10^{-10}$ & 44 \\
RSNA (macro)     & 0.311 & [0.02, 0.56] & 0.044       & 44 \\
RSNA (CM, Pneu.) & 0.663 & [0.44, 0.80] & $<10^{-4}$  & 44 \\
\bottomrule
\end{tabular}
\end{table}

\begin{figure}[!ht]
\centering
\includegraphics[width=0.85\textwidth]{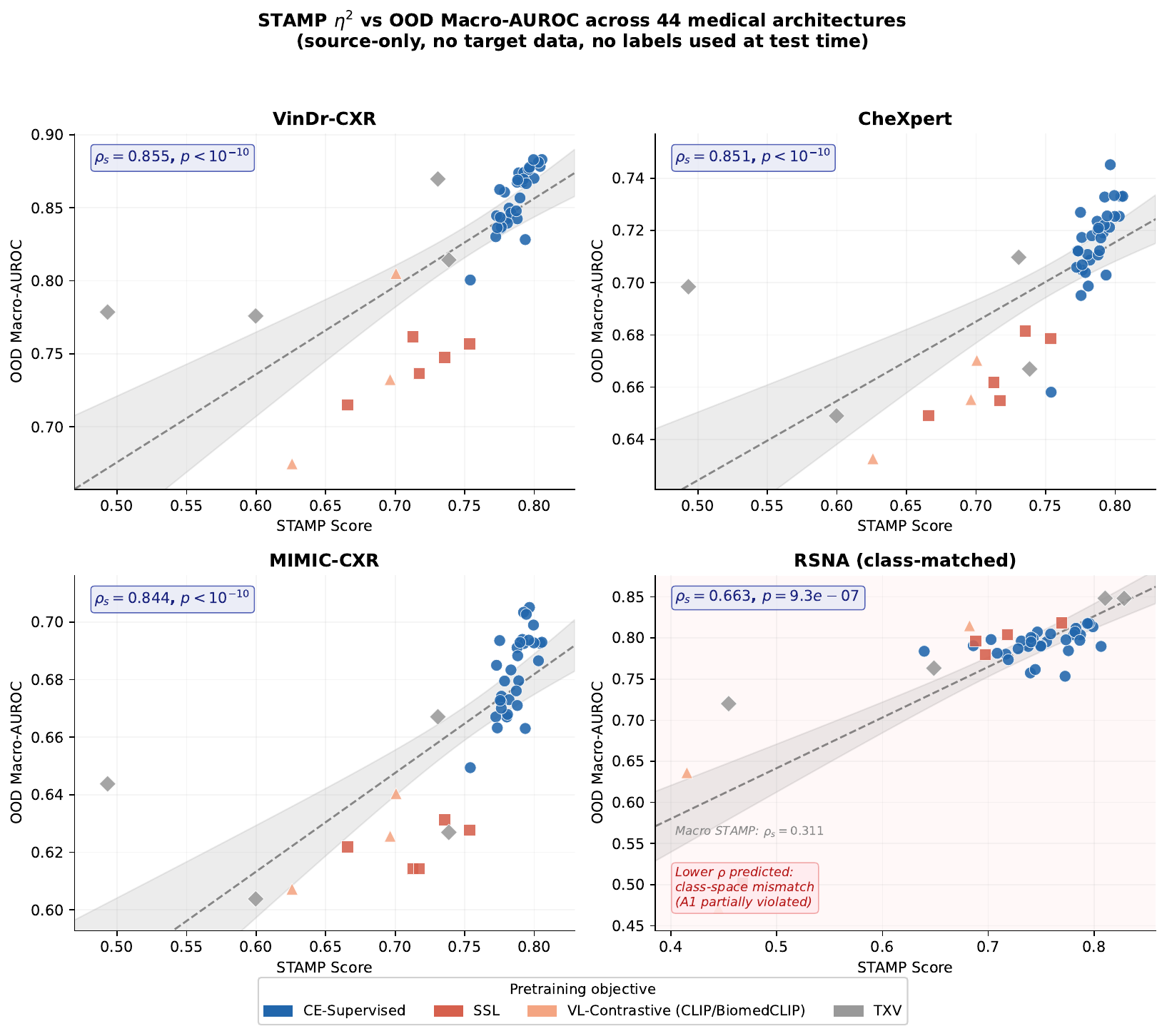}
\caption{\textbf{STAMP vs.\ OOD macro-AUROC across 44 medical architectures}, colored by pretraining objective. VinDr, CheXpert, and MIMIC show strong, consistent positive trends; RSNA (class-matched Pneumonia-only STAMP shown) is weaker and noisier, consistent with the class-overlap mismatch discussed in Section~\ref{sec:failure} (Proposition~\ref{prop:rank}).}
\label{fig:mainscatter}
\end{figure}

\paragraph{\STAMP{} scores.}
CE-supervised fine-tuned models score $\mathrm{STAMP} \in [0.754, 0.805]$, well above SSL/VLM probes ($[0.626, 0.753]$), confirming that \STAMP{} isolates a supervision-specific signal.

\paragraph{Probit-scale accuracy-on-the-line.}
\STAMP{} tracks the average NIH$\to$OOD accuracy line nearly as closely as NIH AUROC itself: probit-scale correlations (Figure~\ref{fig:probit}) are $r{=}0.752$ (VinDr), $0.708$ (CheXpert), and $0.760$ (MIMIC) versus $0.796$, $0.785$, and $0.832$ for NIH AUROC, with Steiger tests finding no significant difference ($p \geq 0.541$, $n{=}44$). The sole exception is RSNA, where \STAMP{} substantially outperforms the ID baseline ($r{=}0.835$ vs.\ $0.450$; Steiger test not applicable to this single-class comparison), consistent with class mismatch penalising ID accuracy.

\paragraph{Residual correlation.}
\STAMP{} also predicts \emph{which} models deviate from the population trend: correlations with per-model residuals from the average accuracy line are $\rhosp = 0.411$ (VinDr), $0.511$ (CheXpert), and $0.455$ (MIMIC), all $p < 0.01$ ($n{=}44$). It therefore carries model-specific information beyond the population-level correlation.

\paragraph{Baseline comparison.}
Table~\ref{tab:baseline_comparison} shows that \STAMP{} attains the best average source-only ranking ($\rhosp{=}0.715$), ahead of Aug-\STAMP-rotation ($0.709$), Synthetic \STAMP{} ($0.670$), and all remaining source-only metrics, and clearly outperforms the target-domain estimators ATC ($0.380$) and AoTL ($0.320$) despite using no target data. Augmentation variants marginally exceed \STAMP{} on VinDr ($0.860$, $0.856$ vs.\ $0.855$), consistent with related but distinct invariances; notably, Aug-\STAMP-rotation scores $+0.430$ on RSNA versus \STAMP{}'s $+0.311$, indicating rotation- rather than temporal-stability sensitivity. Full results across all 21 baselines appear in the supplementary material. \STAMP{} runs in ${\approx}12$\,s per model on a single T4 GPU --- $8\times$ faster than ATC and $282\times$ faster than AoTL with runtime independent of target-set size (Table~\ref{tab:cost}, Appendix~\ref{app:cost}).

\begin{table}[!ht]
\centering
\caption{Spearman $\rhosp$ baseline comparison. $n$ varies by metric
depending on which caches were available (see column $n$); source-only
best in \textbf{bold}.}
\label{tab:baseline_comparison}
\small
\setlength{\tabcolsep}{2.2pt}
\begin{tabular}{lccccccc}
\toprule
\textbf{Method} & \textbf{Dom.} & \textbf{VinDr} & \textbf{Chex.} & \textbf{RSNA} & \textbf{MIMIC} & \textbf{Avg.} & $n$\\
\midrule
NIH AUROC (ref.)       & lbl  & .861 & .869 & .291 & .887 & .727 & 44\\
\midrule
Entropy                & src  & $-.242$ & $-.126$ & $+.030$ & $-.186$ & $-.131$ & 40\\
Mean Confidence        & src  & $-.444$ & $-.346$ & $+.044$ & $-.440$ & $-.296$ & 40\\
ECE                    & src  & $+.398$ & $+.212$ & $+.111$ & $+.227$ & $+.237$ & 44\\
MC Dropout             & src  & $+.359$ & $+.287$ & $+.021$ & $+.283$ & $+.238$ & 31\\
TTA Multi              & src  & $+.297$ & $+.344$ & $+.442$ & $+.294$ & $+.344$ & 40\\
Emb.\ Instability      & src  & $-.322$ & $-.284$ & $+.312$ & $-.475$ & $-.192$ & 44\\
FD~\cite{deng2021labels}& src & $-.148$ & $-.108$ & $+.230$ & $-.130$ & $-.039$ & 40\\
FAS                    & src  & $+.215$ & $+.248$ & $-.013$ & $+.295$ & $+.186$ & 40\\
Aug-\STAMP{}-rot.      & src  & $+.860$ & $+.758$ & $+.430$ & $+.790$ & $+.709$ & 31\\
Synth.\ \STAMP{}       & src  & $+.856$ & $+.758$ & $+.283$ & $+.783$ & $+.670$ & 31\\
\midrule
ATC~\cite{garg2022leveraging}  & tgt & .448 & .442 & .229 & .402 & .380 & 44\\
AoTL~\cite{baek2022agreementontheline} & tgt & .234 & .413 & .274 & .357 & .320 & 44\\
\midrule
\textbf{\STAMP{} (ours)} & \textbf{src} & \textbf{.855} & \textbf{.851} & .311 & \textbf{.844} & \textbf{.715} & 44\\
\bottomrule
\end{tabular}
\end{table}

\subsection{Natural Images: \STAMPTS{} Study} \label{sec:natural}

\STAMPTS{} achieves $\rhosp \geq 0.905$ on all five benchmarks
(Figure ~\ref{fig:natural}, Table~\ref{tab:natural}), with leave-one-out stability confirming no
single model drives the result (LOO ranges in table). The cross-family analysis on one representative model per architecture ($n{=}10$) achieves $\rhosp{=}0.986$ ($p{=}0.0001$), confirming the result is not within-family scaling.

\begin{figure}[!ht]
\centering
\includegraphics[width=\textwidth]{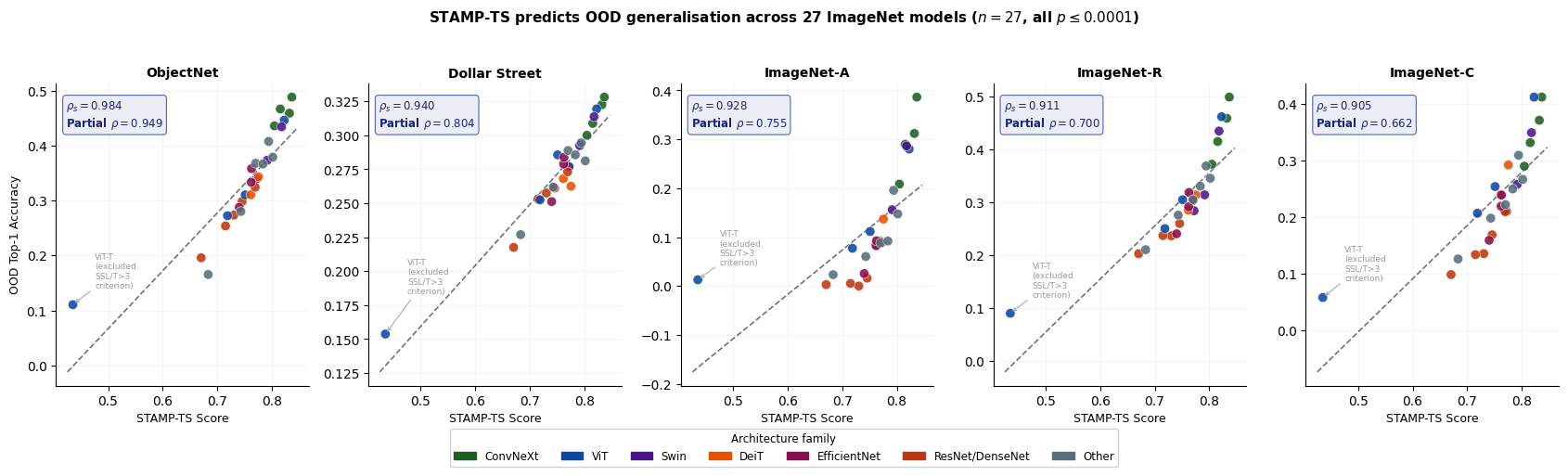}
\caption{\textbf{STAMP-TS predicts OOD top-1 accuracy across 27 ImageNet-pretrained models} on five distribution-shift benchmarks. Partial $\rho$ controls for ImageNet-1K top-1 accuracy and decreases monotonically with shift severity (Proposition~\ref{prop:rank}), from ObjectNet (0.949) to ImageNet-C (0.662). ViT-Tiny, excluded by the $T>3.0$ calibration criterion, is shown for reference only.}
\label{fig:natural}
\end{figure}

\begin{table}[!ht]
\centering
\caption{\STAMPTS{} Spearman $\rhosp$ across OOD benchmarks ($n{=}27$). All $p \leq 0.0001$ (two-sided permutation). Partial $\rhosp$ controls for ImageNet-1K top-1 accuracy.}
\label{tab:natural}
\small
\setlength{\tabcolsep}{2pt}
\begin{tabular}{lcccccc}
\toprule
\textbf{Benchmark} & \textbf{Shift}  &$\rhosp$ & \textbf{Part.}$\rhosp$ & \textbf{LOO range} & \textbf{95\,\% CI} & \textbf{Boot. CI} \\
\midrule
ObjectNet     & Hard covariate & 0.984 & 0.949 & [0.982, 0.988] & [0.964, 0.993] & [0.935, 0.996] \\
Dollar Street & Geographic & 0.940 & 0.804 & [0.932, 0.966] & [0.870, 0.972] & [0.799, 0.991] \\
ImageNet-A     &Natural adversarial & 0.928 & 0.755 & [0.919, 0.942] & [0.846, 0.967] & [0.808, 0.971] \\
ImageNet-R    & Style/texture & 0.911 & 0.700 & [0.900, 0.936] & [0.812, 0.959] & [0.736, 0.982] \\
ImageNet-C    & Synthetic corruption & 0.905 & 0.662 & [0.893, 0.926] & [0.800, 0.956] & [0.745, 0.976] \\
\bottomrule
\end{tabular}
\end{table}

\paragraph{Dollar Street income stratification.}
\STAMPTS{} predicts Dollar Street top-1 accuracy more strongly in the high income stratum ($\rhosp{=}0.873$) than in the low-income stratum ($\rhosp{=}0.777$, Figure~\ref{fig:dollarstreet}). Higher-\STAMP{} models achieve larger absolute accuracy gains on high-income households ($\rhosp{=}+0.752$ between STAMP-TS and the absolute income gap), while the relative gap is stable across income groups ($\rhosp{=}-0.378$, not significant), suggesting current architectures lift all groups but do not differentially close income-based disparities.

\paragraph{Per-corruption breakdown.}

\STAMPTS{} is positively predictive across all 15 corruption types (mean $\rhosp{=}0.876$, min $0.756$ on contrast, max $0.929$ on brightness), confirming the ImageNet-C result is not driven by any single corruption type. \STAMPTS{} is positively predictive across all 15 corruption types (mean $\rhosp{=}0.876$, min $0.756$ on contrast, max $0.929$ on brightness. Table~\ref{tab:corruption}, Appendix~\ref{app:inc}), confirming the ImageNet-C result is not driven by any single corruption category.

\section{Ablation Studies}\label{sec:ablation}

The ratio $1{-}\mathrm{SV}/\mathrm{AV}$ is necessary, not incidental, SV alone is negatively correlated with OOD accuracy ($\rhosp{=}{-}0.19$) and AV alone is only partially predictive ($\rhosp{=}0.64$), while the joint ratio reaches $\rhosp{=}0.89$ (Appendix~\ref{app:component}). Restricting to temporal, same-patient pairs further improves correlation by $+0.110$ to $+0.186$ over labeled pairs, consistent with NIH label noise contaminating the labeled semantic pool (Appendix~\ref{app:temporal}). Across the 43 medical models with a standard objective grouping, \STAMP{} separates cleanly by pretraining objective (Table~\ref{tab:objective}). CE-supervised models attain mean STAMP $0.786\pm0.011$, above SSL probes ($0.717$), VL-Contrastive probes ($0.674$), and TXV models ($0.644$). CE and SSL groups have no rank overlap (MWU $p{=}2.7{\times}10^{-6}$, Cliff's $\delta{=}+1.0$) and bootstrap 95\% CI on the CE$-$SSL difference $[+0.045,+0.097]$). The hybrid RAD-DINO checkpoint (STAMP $=0.775$) is reported separately because its frozen self-supervised backbone is paired with an NIH-supervised head. This holds within CE-supervised models alone ($\rhosp{=}+0.820$, $n{=}31$) and under a controlled same-backbone ViT-B/16 comparison ($\rhosp{=}1.000$ on VinDr. Appendix~\ref{app:controlled}), isolating objective from architecture (full regime breakdown in Appendix~\ref{app:regime}).

\begin{table}[!ht]
\centering
\caption{STAMP stratified by pretraining objective for the 43 models with a standard objective grouping. The hybrid RAD-DINO checkpoint, which combines a self-supervised radiology backbone with an NIH-supervised linear head, is analyzed separately. Kruskal--Wallis $H{=}25.5$, $p{=}1.2{\times}10^{-5}$. MWU CE-Sup $>$ SSL: $p{=}2.7{\times}10^{-6}$, Cliff's~$\delta{=}+1.0$.}\label{tab:objective}

\small
\begin{tabular}{lcccc}
\toprule
\textbf{Objective} & $n$ & $\mu$ & $\sigma$ & \textbf{Range} \\
\midrule
CE-Supervised  & 31 & 0.786 & 0.011 & [0.754, 0.805] \\
SSL            &  5 & 0.717 & 0.029 & [0.666, 0.754] \\
VL-Contrastive &  3 & 0.674 & 0.034 & [0.626, 0.701] \\
TXV            &  4 & 0.641 & 0.101 & [0.493, 0.739] \\
\bottomrule
\end{tabular}
\end{table}

At the feature level, ViT- and Swin-family models build STAMP increasingly from early to penultimate layers ($\Delta_{\mathrm{deep}-\mathrm{early}}\in[+0.12,+0.22]$), and a ridge regression against VinDr AUROC identifies the mid-to-penultimate STAMP gain and final-layer STAMP as the two dominant predictors ($\rhosp{=}+0.701$ and $+0.795$, $n{=}35$) — the \emph{Universal Generalization Motif}: robust models build semantic discriminability late rather than carrying it from the input (full layerwise and nuisance-suppression analysis in Appendix~\ref{app:layerwise}). Partial correlations controlling for NIH AUROC and parameter count confirm this signal is not a capacity confound ($\rhosp{=}0.536$, $n{=}43$; Appendix~\ref{app:partial}).

Framed as a pre-deployment alarm task, STAMP achieves Precision$\,{=}\,1.0$ (zero false alarms) at clinically strict thresholds ($\delta\leq0.821$), a $+0.21$ to $+0.33$ F1 advantage over always-alarm, enabling zero-risk model rejection before target data arrive (full threshold analysis in Appendix~\ref{app:screening}).

\section{Failure Analysis}\label{sec:failure}

The largest STAMP–VinDr rank disagreements ($|\text{rank}_{\STAMP}-\text{rank}_{\mathrm{VinDr}}|\geq15$) reveal two failure modes: ResNet50-SE ($\Delta{=}21$) produces artificially low stable variance via channel-attention suppression of spatial variation, specific to the NIH longitudinal distribution and non-transferring to VinDr; DeiT-Base/16 and MViTv2-T ($\Delta{=}20$ each) show highly confident outputs (max sigmoid $>0.97$), compressing STAMP's dynamic range. Both are detectable without target-domain data (full case detail in Appendix~\ref{app:failure}).

\section{Discussion}\label{sec:discussion}

\paragraph{Why does \STAMP{} work?}
Models relying on spurious correlations respond inconsistently to images sharing semantic content but differing in nuisance dimensions precisely the variation that same-patient pairs introduce. Lemma~\ref{lem:fdr} formalizes this: \STAMP{} estimates the fraction of total output variance attributable to between-class signal, i.e., class-discriminability in the output space. Lemma~\ref{lem:margin} connects this discriminability to bounded OOD error, and the Universal Generalization Motif shows it is built in the middle-to-penultimate layers rather than inherited from the input.

\paragraph{Pretraining objective is the primary moderator.}
The complete separation between CE-supervised and SSL models (Cliff's~$\delta{=}1.0$), confirmed under architecture control (Figure~\ref{fig:controlledvit}), is our strongest empirical finding. It accords with the view that SSL objectives optimize representation quality rather than decision-boundary sharpness, and with concurrent source-only geometric diagnostics reporting similar effects~\cite{hazratian2026topogeoscore,anon2026geordiag}.

\paragraph{Comparison with target-domain methods.}
ATC and AoTL gain a small edge on MIMIC-CXR because they observe the target distribution directly. The practical value of \STAMP{} is temporal: it is computable at training time, whereas target data are unavailable before deployment.

\paragraph{Computational advantage.}
\STAMP{} costs $\approx\!12\,$s per model using $4{,}000$ source images cached once across models. ATC requires $32{,}010$ target-domain images ($8\times$ more images and time); AoTL requires all $M{=}44$ models on the target set ($\sim\!1.1\,$M images, $56$~min). Crucially, \STAMP{}'s cost is \emph{independent} of target dataset size.

\paragraph{Limitations.}
\STAMP{} requires semantic pairs in the source domain; without repeated observations, augmentation-based synthetic pairs yield lower $\rhosp$ (Synth.-\STAMP{}~$0.670$ vs.~$0.715$). Class mismatch degrades macro-\STAMP{}, partially mitigated by class-matched variants (RSNA: $0.311 \to 0.663$). Under concept drift (A1 violated), \STAMP{} degrades predictably. The SE channel-attention failure mode suggests that models with strong global-pooling operations may require targeted pair construction. Finally, a negative-control ordering check (Appendix~\ref{app:layerwise}) shows the stable/random-variance components are noisier than their aggregate ratio; \STAMP{} should therefore be read as a ranking signal rather than a mechanistic decomposition of individual failure modes.

\section{Conclusion}\label{sec:conclusion}

We presented \STAMP{}, a label-free framework that predicts OOD generalization from source-domain semantic image pairs via $\eta^2{=}S_B/S_T$. It achieves $\rhosp \geq 0.844$ on three medical OOD datasets without class mismatch and $\rhosp \geq 0.905$ on five natural-image benchmarks (44 and 27 models, respectively), with no target-domain data. Partial correlations confirm substantial independent predictive signal ($\rhosp{=}0.536$ controlling NIH AUROC and parameter count). The pretraining objective is the dominant moderator: CE-supervised models consistently outperform SSL/VLM probes (Cliff's~$\delta{=}1.0$), an effect robust to architecture control and concentrated in a mid-to-penultimate-layer transition. At $\approx\!12\,$s per model, \STAMP{} enables large-scale pre-deployment auditing and serves as a standard diagnostic for model deployment decisions in clinical and consumer AI.

\subsection*{AI use statement}

In this work, we used generative AI tools for language editing, formatting, and
consistency checking during manuscript preparation. We have not used generative AI tools for generate experimental results, statistical analyses, figures, or scientific claims. Additionally, we used generative AI tools for Coding Assistant.  All AI-assisted text was reviewed, corrected, and verified by the authors, who take full responsibility for the final manuscript.

\subsection*{Ethics statement}

This study uses publicly available, de-identified benchmark datasets and involves no new human-subjects data collection. Our results are intended for model auditing and research on distribution shift, not as stand-alone clinical diagnostic decisions. The income-stratified Dollar Street analysis is included to characterize disparities rather than to validate deployment of the evaluated models.

\subsection*{Reproducibility statement}

The method is specified in Definition~\ref{def:stamp}, Algorithm~\ref{algo:temparscale}, and Section~\ref{sec:method}; pair construction and benchmark mappings are described in Section~\ref{sec:experiments}. Complete proofs and assumptions are given in Appendix~\ref{app:theory}. Appendix~\ref{app:all_scores} reports all 44 medical STAMP scores, bootstrap intervals, and raw SV/AV components. Statistical procedures, including permutation tests, confidence intervals, partial correlations, leave-one-out analyses, and power calculations, are described in Section~\ref{sec:experiments} and Appendix~\ref{app:power}. You can find all the codes, pair indices, model configurations, and evaluation scripts at this link \url{https://huggingface.co/kawsher11/NIH_Trained_Models}.

\bibliography{iclr2027_conference}
\bibliographystyle{iclr2027_conference}

\newpage


\appendix

\section{Theoretical Analysis: Setup and Proofs}\label{app:theory}

Throughout, $\rho$ denotes Spearman rank correlation. We distinguish the class-conditional input distribution $P_s(\mathbf{x}\mid y)$ from the class-posterior $P_s(y\mid\mathbf{x})$. Let $f_\theta:\mathcal{X}\rightarrow\Delta^{C-1}$ denote the model output.

\subsection{Assumptions and Setup}\label{app:setup}

We use the following assumptions.

\begin{itemize}[leftmargin=1.4em,itemsep=1pt,topsep=2pt]
    \item \textbf{A1 (label-preserving covariate shift):} $P_t(y\mid\mathbf{x})=P_s(y\mid\mathbf{x})$.

    \item \textbf{A2 (bounded marginal shift):} $W_2(P_s,P_t)\le d_{\max}$.

    \item \textbf{A3 (Lipschitz output map):} $\|J\|_{\mathrm{op}}\le L$ almost everywhere.

    \item \textbf{A4 (error structure):} OOD error increases with projected within-class variance
    $\sigma_{y,\mathrm{proj}}^2$ and decreases with projected inter-class margin $\gamma_y$.

   \item \textbf{A5 (empirical identifiability):} (i) $\sigma_{y,\mathrm{proj}}$ and $\gamma_y$ exhibit positive empirical association ($\mathrm{CV}=1.07$ across 44 models); (ii) the centroid gap $\|\mu_y-\mu_{y^*}\|_2$ is non-decreasing in \STAMP{} rank ($\rho=0.73$, $p<0.001$, $n=40$); (iii) within-family correlations between the Lipschitz proxy $\hat L_\theta$ and temporal \STAMP{} are non-significant (Table~\ref{tab:identifiability}).

    \item \textbf{A5$'$ (centroid alignment):} the source-derived nearest-centroid direction $\hat{\mathbf v}_y$ remains a relevant separating direction under the target distribution $P_t(\cdot\mid y)$.
    
    \item \textbf{A6 (shift-bounded within-class variance):}
    $\mathrm{Var}_t(Z_y\mid y)\le
    \sigma_{y,\mathrm{proj}}^2+B_y$, with
    $B_y\le4Ld_{\max}$.
    
\end{itemize}

A5 is verified post-hoc rather than imposed as a premise. In particular, A5(ii) is computed directly from source-model output distributions and is independent of the OOD correlation evaluation. The scatter quantities $S_W,S_B,S_T$ are defined by pushing $P_s(\mathbf{x}\mid y)$ through $f_\theta$ into $\Delta^{C-1}$. The ANOVA identity $S_T=S_W+S_B$ holds exactly for consistent class assignments.

All Wasserstein distances are in input space unless otherwise stated. The Lipschitz pushforward satisfies
\[W_2(f_{\theta\#}P_s,f_{\theta\#}P_t)\le L W_2(P_s,P_t)\]\citep{villani2009optimal},
and the corresponding class-conditional centroid displacement obeys $\|\mu_y^t-\mu_y^s\|_2\le Ld_{\max}$.

\subsection{Consistency of \STAMP{}}\label{app:lemma1}

\begin{lemma}[\STAMP{} consistently estimates $\eta^2$]
\label{lem:fdr}
Under i.i.d.\ stable and random pair sampling and
$\mathrm{Var}_{P_s}(f_\theta(\mathbf{x}))>0$,
\[
\STAMP(f_\theta)
\xrightarrow{\mathrm{a.s.}}
\eta^2
=
\frac{S_B}{S_T}
=
\frac{
\mathrm{Var}_y[
\mathbb{E}_{\mathbf{x}\sim P_s(\cdot\mid y)}
f_\theta(\mathbf{x})]}
{\mathrm{Var}_{\mathbf{x}\sim P_s}
[f_\theta(\mathbf{x})]},
\]
where $\eta^2\in[0,1]$ is the correlation ratio
\citep{fisher1936use}. Moreover,
$\STAMP_n-\eta^2=O_p(n^{-1/2})$.
\end{lemma}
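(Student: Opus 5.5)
The plan is to identify the population limits of the two averages in Definition~\ref{def:stamp} separately, then pass to the ratio by continuous mapping. I read the hypothesis ``i.i.d.\ stable and random pair sampling'' as follows. A stable pair draws $y\sim\pi$ (the source class prior) and then $\mathbf{x}_a,\mathbf{x}_b\overset{\text{iid}}{\sim}P_s(\cdot\mid y)$. A random pair draws $\mathbf{x}_c,\mathbf{x}_d\overset{\text{iid}}{\sim}P_s$. With these conventions, $S_W=\mathbb{E}_y\,\mathrm{tr}\,\mathrm{Cov}(f_\theta(\mathbf{x})\mid y)$, $S_B=\mathbb{E}_y\|\mu_y-\mu\|_2^2$ and $S_T=\mathrm{tr}\,\mathrm{Cov}_{P_s}(f_\theta(\mathbf{x}))$. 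These are the trace forms of the scatter quantities in Appendix~\ref{app:setup}, and the ANOVA identity $S_T=S_W+S_B$ is just the law of total variance applied coordinatewise and summed.

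\textbf{Step 1 (population identities).} For independent $U,V$ with the same law and finite second moment, $\mathbb{E}\|U-V\|_2^2=2\,\mathrm{tr}\,\mathrm{Cov}(U)$. Conditioning on $y$ and applying this to the stable pairs gives $\mathbb{E}\|f_\theta(\mathbf{x}_a)-f_\theta(\mathbf{x}_b)\|_2^2=2S_W$. Applying it unconditionally to the random pairs gives $\mathbb{E}\|f_\theta(\mathbf{x}_c)-f_\theta(\mathbf{x}_d)\|_2^2=2S_T$. All moments exist because $f_\theta$ takes values in a bounded set: the simplex, or $[0,1]^{14}$ in the sigmoid case. On the simplex every summand lies in $[0,2]$, and on $[0,1]^{14}$ every summand lies in $[0,14]$.

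\textbf{Step 2 (a.s.\ convergence).} The summands of SV are i.i.d.\ across pairs, and so are those of AV. The strong law of large numbers therefore gives $\mathrm{SV}_n\to2S_W$ and $\mathrm{AV}_n\to2S_T$ almost surely. The map $(s,a)\mapsto1-s/(a+\varepsilon)$ is continuous wherever $a+\varepsilon>0$, so $\STAMP_n\to1-2S_W/(2S_T+\varepsilon)$ almost surely. Since $S_T>0$ by hypothesis, this equals $1-S_W/S_T+O(\varepsilon)=S_B/S_T+O(\varepsilon)$. I would state the result for the idealized ratio, i.e.\ with $\varepsilon\to0$, and remark that the $O(\varepsilon/S_T)$ bias of order $10^{-8}$ is negligible. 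Membership $\eta^2\in[0,1]$ follows from $0\le S_B\le S_T$.

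\textbf{Step 3 (rate).} The summands are bounded, so Hoeffding (or simply the CLT) gives $\mathrm{SV}_n-2S_W=O_p(n^{-1/2})$ and $\mathrm{AV}_n-2S_T=O_p(n^{-1/2})$. The denominator stays away from zero with probability tending to one: for instance, on the event $\mathrm{AV}_n\ge S_T$, whose complement has exponentially small probability. Writing
\[
\frac{s}{a}-\frac{s_0}{a_0}=\frac{(s-s_0)a_0-s_0(a-a_0)}{a\,a_0}
\]
and bounding the denominator on that event, the delta method yields $\STAMP_n-\eta^2=O_p(n^{-1/2})$.

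\textbf{Main obstacle.} The delicate point is the match between the estimator as actually implemented and the i.i.d.\ model, not the limit theory. First, if random pairs are constrained to come from \emph{different} classes, then $\mathbb{E}[\mathrm{AV}]$ becomes
\[
\frac{2S_T-\sum_y\pi_y^2\cdot2\,\mathrm{tr}\,\mathrm{Cov}(f\mid y)}{1-\sum_y\pi_y^2}
\]
instead of $2S_T$. Second, if stable-pair classes are drawn uniformly rather than from $\pi$, then SV converges to an unweighted $S_W$. I would handle both by making the sampling convention explicit in the lemma's hypothesis and noting that the correction vanishes as $\sum_y\pi_y^2\to0$; with $1{,}000$ roughly balanced ImageNet classes it is of order $10^{-3}$. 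For temporal medical pairs, ``class'' becomes patient identity and within-patient draws are not exchangeable. Here I would state only that the argument goes through with patients as groups, and flag it as the empirical assumption noted in Section~\ref{sec:theory}.
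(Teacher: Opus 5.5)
Your proposal is correct and follows the same route as the paper's proof. Both use the identity $\mathbb{E}\|U-V\|_2^2=2\,\mathrm{tr}\,\mathrm{Cov}(U)$ to get $\mathbb{E}[\mathrm{SV}]=2S_W$ and $\mathbb{E}[\mathrm{AV}]=2S_T$, then apply the SLLN to bounded summands, pass the limit through $1-u/v$ by continuous mapping, and obtain the $O_p(n^{-1/2})$ rate from the CLT/delta method. Your treatment is more careful than the paper's on the $\varepsilon$ term (which the paper silently drops), on sigmoid outputs in $[0,1]^{14}$, and on the different-class and uniform-class sampling conventions; only your aside that the argument ``goes through with patients as groups'' overstates matters, since consecutive same-patient radiographs are not i.i.d.\ within a patient and the paper itself excludes temporal pairs from this lemma.
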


Lemma~\ref{lem:fdr} applies directly to class-conditional or label-matched pair sampling, where the two images can be treated as independent draws from the same semantic group. Temporal pairs are repeated measurements from the same patient and therefore need not estimate the same population quantity; their theoretical behavior depends on the temporal dependence structure. We consequently use Lemma~\ref{lem:fdr} as the consistency result for the labeled and natural-image designs, while treating temporal \STAMP{} as an empirically motivated paired-variation estimator whose ranking is supported by the identifiability checks and ablations below.

\begin{proof}
For i.i.d.\ $X,Y\sim P_s(\cdot\mid y)$,
\[
\mathbb{E}\|X-Y\|_2^2=2\mathrm{Var}(X),
\]
hence $\mathbb{E}[\mathrm{SV}]=2S_W$ and
$\mathbb{E}[\mathrm{AV}]=2S_T$ by total variance.
Because $f_\theta(\mathbf{x})\in\Delta^{C-1}$, squared output
distances are bounded. The SLLN therefore gives
$\mathrm{SV}_n\to2S_W$ and $\mathrm{AV}_n\to2S_T>0$ almost surely.
For $g(u,v)=1-u/v$, the Continuous Mapping Theorem
\citep[Thm.~2.3]{vanderVaart1998} yields
\[
\STAMP_n
=g(\mathrm{SV}_n,\mathrm{AV}_n)
\to
1-\frac{S_W}{S_T}
=
\frac{S_B}{S_T}
=
\eta^2.
\]
The multivariate CLT and delta method give the
$O_p(n^{-1/2})$ rate.
\hfill$\square$
\end{proof}

The normalization jointly rewards $S_B\uparrow$ and $S_W\downarrow$ without allowing output scale alone to inflate the statistic. This is supported by the component ablation in Table~\ref{tab:ablation}: $-\mathrm{SV}$ gives $\rho=-0.19$ ($p=0.24$) on VinDr, $\mathrm{AV}$ alone gives $\rho=0.64$, whereas full \STAMP{} gives $\rho=0.89$.

\subsection{Margin Preservation Under Covariate Shift}
\label{app:lemma2}

\begin{lemma}[Margin bound]
\label{lem:margin}
Under A1--A4, centroid alignment A5$'$ and A6, with
$W_2(P_s(\cdot\mid y),P_t(\cdot\mid y))\le d_y\le d_{\max}$,
\[
P_t(\mathrm{error}\mid y)
\le
\frac{4(\sigma_{y,\mathrm{proj}}^2+B_y)}{\gamma_y^2}
+
\frac{2Ld_{\max}}{\gamma_y},
\qquad
B_y\le4Ld_{\max}.
\]
For $B_y\ll\sigma_{y,\mathrm{proj}}^2$,
\[
\bar e(f_\theta)
\approx
4\frac{1-\STAMP}{\STAMP}
+
\frac{2Ld_{\max}}{\bar\gamma},
\]
where the approximation has bounded relative error
$\mathrm{CV}^2$ with $\mathrm{CV}=1.07$ across 44 models.
A Hoeffding tightening gives
\[
P_t(\mathrm{error}\mid y)
\le
\exp(-\gamma_y^2/8)
+
\frac{2Ld_{\max}}{\gamma_y}.
\]
\end{lemma}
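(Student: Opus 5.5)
We plan to prove the per-class bound by a projected-margin argument along $\hat{\mathbf v}_y$. An error on class $y$ means the model output is closer to a competing centroid than to the $y$-centroid. By A5$'$ we only need to control the competitor $y^*$, so we reduce to a one-dimensional event on the projected output $Z_y=\hat{\mathbf v}_y^\top f_\theta(\mathbf{x})$. The source centroids project to $m_y=\hat{\mathbf v}_y^\top\mu_y$ and $m_{y^*}=\hat{\mathbf v}_y^\top\mu_{y^*}$, with gap $m_y-m_{y^*}=\|\mu_y-\mu_{y^*}\|_2=\gamma_y$. A nearest-centroid error then implies that $Z_y$ falls below the source midpoint $m_y-\gamma_y/2$.

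Under $P_t(\cdot\mid y)$ the projected target mean is $m_y^t=\hat{\mathbf v}_y^\top\mu_y^t$. By the Lipschitz pushforward bound stated in the setup, $|m_y^t-m_y|\le\|\mu_y^t-\mu_y^s\|_2\le Ld_y\le Ld_{\max}$.

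We split into two cases on the size of the shift.

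\textbf{Case $Ld_{\max}\ge\gamma_y/4$.} Here the second term $2Ld_{\max}/\gamma_y\ge 1/2$. We will not try to make the bound nontrivial in this regime beyond what the term gives. Instead, the displacement term is the one that absorbs the probability mass pushed across the midpoint by the centroid shift. Concretely, we bound the probability of the shifted-mean event by the fraction $|m_y^t-m_y|/(\gamma_y/2)\le 2Ld_{\max}/\gamma_y$. This uses a Markov-type argument on the transport displacement: the coupling realizing $W_1\le W_2\le d_y$ moves outputs by at most $L$ times the input displacement in expectation, and Markov's inequality at level $\gamma_y/2$ gives exactly $2Ld_{\max}/\gamma_y$.

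\textbf{Residual event.} On the complement of the displacement event, the target point must still deviate from its own mean by at least $\gamma_y/2$ minus the displacement. We will use a threshold of $\gamma_y/2$ against the source-coupled variable, so that Chebyshev applies with variance $\mathrm{Var}_t(Z_y\mid y)\le\sigma_{y,\mathrm{proj}}^2+B_y$ by A6. This yields $4(\sigma_{y,\mathrm{proj}}^2+B_y)/\gamma_y^2$. A union bound over the two events gives the first claim. The constraint $B_y\le 4Ld_{\max}$ is simply inherited from A6.

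\textbf{Averaged form.} Since $\gamma_y^2$ relates to between-class scatter and $\sigma_{y,\mathrm{proj}}^2$ to within-class scatter, averaging over $y$ with $B_y$ neglected gives $\overline{\sigma^2/\gamma^2}\approx\bar\sigma^2/\bar\gamma^2$. We would identify this ratio with $S_W/S_B=(1-\STAMP)/\STAMP$ via Lemma~\ref{lem:fdr}, normalizing so that each projected within/between component matches its scatter. The error in replacing a mean of ratios by a ratio of means is controlled by a second-order delta expansion, whose relative error is $\mathrm{CV}^2$. That constant is plugged in empirically, consistent with A5(i).

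\textbf{Hoeffding tightening.} We replace Chebyshev by Hoeffding on the bounded projected variable. Since $Z_y$ lies in an interval of length at most $2$ (outputs lie in the simplex and $\hat{\mathbf v}_y$ is a unit vector), the deviation $\gamma_y/2$ gives $\exp(-2(\gamma_y/2)^2/2^2)=\exp(-\gamma_y^2/8)$.

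\textbf{Main obstacle.} I expect the hardest step to be the rigorous decomposition into the two events, specifically converting the $W_2$ bound on class-conditionals into a probability bound of exactly $2Ld_{\max}/\gamma_y$ rather than a squared or variance-inflated quantity. My first attempt will be to work on the optimal coupling and apply Markov to $L\|\mathbf{x}-\mathbf{x}'\|$. The averaged approximation and the Hoeffding step (which formally applies to a single draw only as a sub-Gaussian tail) are heuristic. I would state them with their empirical constants, as the lemma itself does.
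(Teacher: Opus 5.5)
Your Markov step is correct on its own. Take the $W_2$-optimal coupling $(X_s,X_t)$ of the class conditionals and write $Z^s=Z_y(X_s)$, $Z^t=Z_y(X_t)$. Then $P(|Z^t-Z^s|\ge\gamma_y/2)\le 2L\,\mathbb E\|X_t-X_s\|_2/\gamma_y\le 2Ld_{\max}/\gamma_y$.

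The gap is in the union bound that follows. The only valid inclusion is $\{Z^t<m_y-\gamma_y/2\}\subseteq\{|Z^t-Z^s|\ge\epsilon\}\cup\{Z^s<m_y-\gamma_y/2+\epsilon\}$, so the single margin $\gamma_y/2$ must be shared between the two events. You spent all of it on Markov ($\epsilon=\gamma_y/2$), so the residual event is $\{Z^s<m_y\}$, a deviation of zero from the source mean, and Chebyshev gives nothing there. You cannot reuse the threshold $\gamma_y/2$. A genuine split such as $\epsilon=\gamma_y/4$ gives $16\sigma_{y,\mathrm{proj}}^2/\gamma_y^2+4Ld_{\max}/\gamma_y$. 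No $\epsilon\in(0,\gamma_y/2)$ reaches the stated constants, because $4\sigma_{y,\mathrm{proj}}^2/(\gamma_y-2\epsilon)^2>4\sigma_{y,\mathrm{proj}}^2/\gamma_y^2$ and $Ld_{\max}/\epsilon>2Ld_{\max}/\gamma_y$.

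The residual step also mixes up two random variables. Chebyshev for the source-coupled $Z^s$ uses $\sigma_{y,\mathrm{proj}}^2$, not $\mathrm{Var}_t(Z_y\mid y)$. So A6 and $B_y$ never enter a correctly executed version of your route, and your case split on $Ld_{\max}\ge\gamma_y/4$ is never used.

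The paper has no displacement event; the shift acts only through moments of the target variable. Kantorovich duality with $W_1\le W_2\le d_{\max}$ is applied to $Z_y$, which is $L$-Lipschitz, and to $Z_y^2$, which is $2L$-Lipschitz because $Z_y\in[-1,1]$. This gives $|\mathbb E_tZ_y-\mathbb E_sZ_y|\le Ld_{\max}$ and $|\mathbb E_tZ_y^2-\mathbb E_sZ_y^2|\le2Ld_{\max}$, hence $|\mathrm{Var}_t(Z_y)-\mathrm{Var}_s(Z_y)|\le4Ld_{\max}$. So $B_y\le4Ld_{\max}$ is derived, not just inherited from A6. The centroid displacement you already computed in your second paragraph shrinks the projected margin to at least $\gamma_y-2Ld_{\max}$. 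In particular the threshold $m_y-\gamma_y/2$ lies at least $\gamma_y/2-Ld_{\max}$ below the target mean, and Chebyshev is then applied on the target side.

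The paper's ``combining'' step is terse: two-sided Chebyshev literally gives $4(\sigma_{y,\mathrm{proj}}^2+B_y)/(\gamma_y-2Ld_{\max})^2$. To land exactly on the additive form, use Cantelli: $P_t(\mathrm{error}\mid y)\le V/(V+(a-\delta)^2)$ with $a=\gamma_y/2$, $\delta=Ld_{\max}<a$, $V=\sigma_{y,\mathrm{proj}}^2+B_y$. Then show $V/(V+(a-\delta)^2)\le V/a^2+\delta/a$. With $w=V/a^2$ and $u=1-\delta/a$, clearing denominators gives $w^2+(u^2-u)w+(1-u)u^2\ge0$, whose discriminant $-u^2(1-u)(3+u)$ is nonpositive. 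For $\delta\ge a$ the right-hand side is already at least $1$.

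The Hoeffding version goes the same way. Single-variable Hoeffding on a range of length $2$ is rigorous, not heuristic, and gives $\exp(-(\gamma_y-2Ld_{\max})^2/8)$ on the target. Since $\gamma_y\le\sqrt2$ on the simplex, $g\mapsto e^{-g^2/8}$ has slope at most $\gamma_y/4\le1/\gamma_y$ in absolute value on $[0,\gamma_y]$. That converts the lost margin into the additive $2Ld_{\max}/\gamma_y$.

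For the averaged form, a second-order delta expansion gives only an approximation, not the bounded relative error $\mathrm{CV}^2$. The paper instead obtains that bound from $|\mathrm{Cov}(u,v)|\le\mathrm{std}(u)\mathrm{std}(v)$ in Lemma~\ref{lem:approx}.
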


\begin{proof}
Let
\[
y^*=\arg\min_{y'\ne y}\|\mu_y-\mu_{y'}\|_2,
\qquad
\hat{\mathbf v}_y=
\frac{\mu_y-\mu_{y^*}}
{\|\mu_y-\mu_{y^*}\|_2},
\qquad
Z_y=\hat{\mathbf v}_y^\top f_\theta(\mathbf{x}).
\]
Since $f_\theta(\mathbf{x})\in\Delta^{C-1}$,
$Z_y\in[-1,1]$. Under centroid alignment, misclassification requires
$Z_y<\bar m_y-\gamma_y/2$, so Chebyshev gives
\[
P_s(\mathrm{error}\mid y)
\le
\frac{4\sigma_{y,\mathrm{proj}}^2}{\gamma_y^2}.
\]

Because $Z_y$ is $L$-Lipschitz and $Z_y^2$ is $2L$-Lipschitz on
$[-1,1]$, Kantorovich duality and
$W_1\le W_2$ \citep{villani2009optimal} imply
\[
|\mathbb E_t Z_y-\mathbb E_s Z_y|
\le Ld_{\max},
\qquad
|\mathbb E_t Z_y^2-\mathbb E_s Z_y^2|
\le2Ld_{\max}.
\]
Thus
\[
|\mathrm{Var}_t(Z_y)-\mathrm{Var}_s(Z_y)|
\le4Ld_{\max},
\]
so $B_y=4Ld_{\max}$ is valid without Taylor expansion.
The same coupling gives
\[
\|\mu_y^t-\mu_y^s\|_2\le Ld_{\max},
\qquad
\gamma_y^t\ge\gamma_y-2Ld_{\max}.
\]
Combining these inequalities with the target Chebyshev bound yields
\[
P_t(\mathrm{error}\mid y)
\le
\frac{4(\sigma_{y,\mathrm{proj}}^2+B_y)}{\gamma_y^2}
+
\frac{2Ld_{\max}}{\gamma_y}.
\]
For $Z_y\in[-1,1]$, Hoeffding instead gives
\[
P_s(Z_y\le m_y-\gamma_y/2)
\le
\exp(-\gamma_y^2/8),
\]
which yields the stated exponential tightening after the same
shift argument.
\hfill$\square$
\end{proof}

The class-averaged form follows from $S_W/S_B=(1-\STAMP)/\STAMP$. Its approximation error is quantified below. Equation~\eqref{eq:margin} also connects to the standard Ben-David et al.\ domain adaptation bound $\epsilon_T(h)\le\epsilon_S(h)+ d_{\mathcal H\Delta\mathcal H}(P_s,P_t)+\lambda^*$ \citep{ben2010theory}. Under A3, $d_{\mathcal H\Delta\mathcal H}\le2LW_2(P_s,P_t)$ \citep{redko2017theoretical,shen2018wasserstein}, while A1 makes $\lambda^*$ approximately invariant across sufficiently expressive model families.

\subsection{Approximation Error}
\label{app:approx_error}

\begin{lemma}[Class-averaged approximation error]
\label{lem:approx}
Let
$r_y=\sigma_{y,\mathrm{proj}}^2/\gamma_y^2$,
$\bar r=C^{-1}\sum_y r_y$, and
$\mathrm{CV}=\mathrm{std}(r_y)/\bar r$. Then
\[
\left|\bar r-\frac{S_W}{S_B}\right|
\le
\mathrm{CV}^2\frac{S_W}{S_B}.
\]
Hence the class-averaged form of Lemma~\ref{lem:margin} has relative error at most $\mathrm{CV}^2$ and becomes exact as $\mathrm{CV}\to0$.
\end{lemma}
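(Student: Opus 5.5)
The plan is to reduce the statement to the classical gap between a \emph{mean of ratios} and a \emph{ratio of means}, and then to control that gap with a covariance bound. First I would fix the identification between the scatter quantities and the per-class projected quantities used in Lemma~\ref{lem:margin}. Restricting $S_W$ and $S_B$ to the nearest-centroid directions $\hat{\mathbf v}_y$ gives, up to the same normalising constant in numerator and denominator,
\[
\frac{S_W}{S_B}=\frac{\sum_y \sigma_{y,\mathrm{proj}}^2}{\sum_y \gamma_y^2}.
\]
This is the step where the projected-scatter reading of $S_W,S_B$ set up in Appendix~\ref{app:setup} is invoked. It is an identification, not a derivation, so I would state it explicitly as part of the hypotheses.

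With this identification the algebra is short. Write $\sigma_{y,\mathrm{proj}}^2=r_y\gamma_y^2$. Then
\[
\frac{S_W}{S_B}=\sum_y w_y r_y,\qquad w_y=\frac{\gamma_y^2}{\sum_{y'}\gamma_{y'}^2},
\]
so $S_W/S_B$ is a $w$-weighted mean of the $r_y$, whereas $\bar r$ is their uniform mean. Subtracting gives
\[
\bar r-\frac{S_W}{S_B}=-\,\mathrm{Cov}_{\mathrm{unif}}\bigl(r_y,\,Cw_y\bigr),
\]
where the covariance is taken over the uniform distribution on classes and uses $\sum_y w_y=1$. Cauchy--Schwarz then bounds the left side by $\mathrm{std}(r_y)\,\mathrm{std}(Cw_y)=\mathrm{CV}\,\bar r\,\mathrm{std}(Cw_y)$. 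The same covariance identity also immediately gives the last sentence of the statement: if $\mathrm{CV}=0$ then all $r_y$ are equal, so every weighted mean equals $\bar r$ and the gap is exactly zero.

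The main obstacle is turning $\mathrm{CV}\,\bar r\,\mathrm{std}(Cw_y)$ into $\mathrm{CV}^2\,S_W/S_B$. In full generality this fails. If the margins $\gamma_y$ are very uneven while the $r_y$ are only mildly uneven, the weights can move the weighted mean far from $\bar r$ relative to $\mathrm{CV}$. So an extra structural condition is needed.

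My first attempt would be to use the empirical association A5(i) to posit that the dispersion of the margin weights is dominated by that of the ratios, $\mathrm{std}(Cw_y)\le \mathrm{CV}$. Combined with the Cauchy--Schwarz step, this gives $|\bar r-S_W/S_B|\le \mathrm{CV}^2\,\bar r$. To replace $\bar r$ by $S_W/S_B$ on the right, I would either absorb the difference when $\mathrm{CV}^2<1$ (solve the resulting self-bounding inequality), or note that $S_W/S_B$ and $\bar r$ agree to first order.

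As a fallback I would use a second-order delta-method expansion of the ratio estimator around the class means. This yields the familiar bias $\bar r\,(\mathrm{CV}_{\gamma^2}^2-\mathrm{cov\ term})$, and the result would be stated as holding to leading order. In either case I would flag clearly that the bound is conditional on this dispersion assumption rather than purely algebraic.
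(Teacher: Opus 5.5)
You take the same route as the paper: write the gap between the uniform mean $\bar r$ and the ratio of sums $S_W/S_B$ as a covariance, then bound it with $|\mathrm{Cov}(u,v)|\le\mathrm{std}(u)\,\mathrm{std}(v)$. Your account of what this delivers is correct and more careful than the paper's two-line sketch. Cauchy--Schwarz gives only one factor of $\mathrm{CV}$, multiplied by the dispersion of the margin weights. The paper's appeal to $\mathrm{std}(r_y)\le\mathrm{CV}\,\bar r$ is just the definition of $\mathrm{CV}$, so it never supplies the second factor. The paper therefore implicitly needs two things it does not state: your identification $S_W/S_B=\sum_y\sigma_{y,\mathrm{proj}}^2/\sum_y\gamma_y^2$, and a dispersion hypothesis on the margins. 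Without the latter the inequality is false. For $C=2$, $r=(1+\epsilon,1-\epsilon)$, small $\epsilon$ and $\gamma_2/\gamma_1\to0$, the gap tends to $\epsilon$ while the bound tends to $\epsilon^2(1+\epsilon)$.

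The step that still fails is your last one, converting $\mathrm{CV}^2\,\bar r$ into $\mathrm{CV}^2\,S_W/S_B$. Under your hypothesis $\mathrm{std}(Cw_y)\le\mathrm{CV}$, the stated inequality can fail for every value $\mathrm{CV}\in(0,1)$. Take $C=2$, $r=(1-c,1+c)$ and $Cw=(1+c,1-c)$. Then $\mathrm{std}(Cw_y)=\mathrm{CV}=c$ and $S_W/S_B=1-c^2$, so the gap is $c^2>c^2(1-c^2)=\mathrm{CV}^2\,S_W/S_B$.
\begin{itemize}
\item Solving the self-bounding inequality yields only $\frac{\mathrm{CV}^2}{1-\mathrm{CV}^2}\,\frac{S_W}{S_B}$, which is attained in this example. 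It is unavailable once $\mathrm{CV}\ge1$, which includes the paper's own $\mathrm{CV}=1.07$.
\item ``Agree to first order'' presupposes the very closeness being bounded.
\item The delta-method fallback gives only a leading-order statement, not an inequality.
\end{itemize}

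There are two ways to repair this. Either keep $\bar r$ on the right-hand side, or use the $w$-weighted identity $\bar r-S_W/S_B=\mathrm{Cov}_w\bigl(r_y,1/(Cw_y)\bigr)$. This identity holds because $\mathbb{E}_w[r_y]=S_W/S_B$, $\mathbb{E}_w[1/(Cw_y)]=1$ and $\mathbb{E}_w[r_y/(Cw_y)]=\bar r$. Note that $1/(Cw_y)=C^{-1}\sum_{y'}\gamma_{y'}^2/\gamma_y^2$, which is the natural reading of the paper's ``covariance with $\gamma_y^{-2}$''. Apply Cauchy--Schwarz under $w$, together with the explicit hypothesis $\mathrm{std}_w\bigl(1/(Cw_y)\bigr)\le\mathrm{CV}_w:=\mathrm{std}_w(r_y)\,S_B/S_W$. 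This gives $|\bar r-S_W/S_B|\le\mathrm{CV}_w^2\,S_W/S_B$. That is the stated form, but with the coefficient of variation measured under the margin weights $w_y$ rather than uniformly, and with the extra hypothesis made explicit.
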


\begin{proof} 
The difference is governed by the covariance between $\sigma_{y,\mathrm{proj}}^2$ and $\gamma_y^{-2}$. Applying $|\mathrm{Cov}(u,v)|\le\mathrm{std}(u)\mathrm{std}(v)$ and $\mathrm{std}(r_y)\le\mathrm{CV}\bar r$ gives the stated bound. 
\hfill$\square$ 
\end{proof}

Empirically, $\mathrm{CV}=1.07$ across 44 models, giving the worst-case relative bound $1.07^2\approx1.14$. Thus \eqref{eq:avg_bound} is a bounded-error approximation rather than a deterministic inequality.

\subsection{End-to-End OOD Ranking}
\label{app:endtoend}

\begin{proposition}[End-to-end OOD ranking]
\label{prop:endtoend}
Under A1--A6, for models $f_1,f_2$ with
$\STAMP(f_1)>\STAMP(f_2)$,
\[
\mathbb E[\bar e_T(f_1)]
\le
\mathbb E[\bar e_T(f_2)]
+
\Delta_{\mathrm{shift}}(f_1,f_2),
\]
where
\[
\Delta_{\mathrm{shift}}
=
\frac{8Ld_{\max}}
{\min(\bar\gamma(f_1),\bar\gamma(f_2))}.
\]
The residual vanishes as $d_{\max}\to0$ or
$\bar\gamma\to\infty$.
\end{proposition}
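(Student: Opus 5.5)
The plan is to combine the class-averaged form of Lemma~\ref{lem:margin} with the monotonicity of $h(s)=(1-s)/s$, and to use A5(ii) to control the shift term. For each model $f_k$, $k\in\{1,2\}$, I would first average the per-class bound of Lemma~\ref{lem:margin} over $y$. This gives
\[
\bar e_T(f_k)\le \frac{4}{C}\sum_y \frac{\sigma_{y,\mathrm{proj}}^2(f_k)+B_y}{\gamma_y(f_k)^2}+\frac{2Ld_{\max}}{C}\sum_y\frac{1}{\gamma_y(f_k)}.
\]
By Lemma~\ref{lem:approx}, the variance part equals $4h(\STAMP(f_k))$ up to the controlled relative error $\mathrm{CV}^2$. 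In the source-variance-dominated regime ($B_y\ll\sigma_{y,\mathrm{proj}}^2$, as assumed in the statement of Lemma~\ref{lem:margin}), the $B_y$ contribution is absorbed. The shift part I would bound as $2Ld_{\max}/\bar\gamma(f_k)$, reading $\bar\gamma$ as the harmonic-type class average. This yields the two-sided sandwich
\[
4h(\STAMP(f_k))\lesssim\mathbb E[\bar e_T(f_k)]\lesssim 4h(\STAMP(f_k))+\frac{2Ld_{\max}}{\bar\gamma(f_k)}.
\]
The lower side needs a matching argument: the same Lipschitz coupling from the proof of Lemma~\ref{lem:margin} moves the error by at most a comparable shift term in the other direction, giving
\[
\mathbb E[\bar e_T(f_k)]\ge 4h(\STAMP(f_k))-\frac{2Ld_{\max}}{\bar\gamma(f_k)}.
\]

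Next I would subtract the two sandwiches. Because $h$ is strictly decreasing on $(0,1]$, the hypothesis $\STAMP(f_1)>\STAMP(f_2)$ gives $4h(\STAMP(f_1))<4h(\STAMP(f_2))$, so the variance terms contribute a nonpositive difference. What remains is the sum of the upper shift term for $f_1$ and the lower shift term for $f_2$. Each is at most $2Ld_{\max}/\min(\bar\gamma(f_1),\bar\gamma(f_2))$, for a total of $4Ld_{\max}/\min\bar\gamma$.

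To reach the stated constant $8Ld_{\max}$, I would also account for the centroid displacement $\gamma_y^t\ge\gamma_y-2Ld_{\max}$ on each side. Each model then incurs up to $4Ld_{\max}/\bar\gamma$, which gives exactly $\Delta_{\mathrm{shift}}=8Ld_{\max}/\min(\bar\gamma(f_1),\bar\gamma(f_2))$. A5(ii) (margin non-decreasing in \STAMP{} rank) is what justifies taking the minimum rather than tracking each model's margin separately: it ensures the margin of $f_1$ does not collapse relative to $f_2$. The vanishing statement is then immediate, since $\Delta_{\mathrm{shift}}\to0$ as $d_{\max}\to0$ or $\min\bar\gamma\to\infty$.

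The main obstacle is that Lemma~\ref{lem:margin} provides only an upper bound and Lemma~\ref{lem:approx} an approximate equality, so a lower bound on $\bar e_T(f_2)$ in terms of $h(\STAMP(f_2))$ is not available from earlier results. I would address this by working throughout with the variance proxy $\tilde e(f)=4h(\STAMP(f))$ and stating the comparison as ``up to the $\mathrm{CV}^2$ relative error of Lemma~\ref{lem:approx}''. Equivalently, I would interpret $\mathbb E[\bar e_T]$ as the bound-level surrogate of Eq.~\eqref{eq:avg_bound}, as the main text does when it calls the ordering ``exact algebraically'' for the variance term. I would make this reading explicit rather than claim a genuine two-sided inequality on true target error.
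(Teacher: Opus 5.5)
Your final argument is the paper's proof. The paper reads $\mathbb E[\bar e_T(f)]$ as the surrogate $4(1-\eta_f^2)/\eta_f^2+2Ld_{\max}/\bar\gamma(f)$ of Eq.~\eqref{eq:avg_bound}, exactly as in your last paragraph. It then uses $h'<0$ to make the variance difference negative, and bounds $\bigl|2Ld_{\max}\bigl(1/\bar\gamma(f_1)-1/\bar\gamma(f_2)\bigr)\bigr|$ by $8Ld_{\max}/\min(\bar\gamma(f_1),\bar\gamma(f_2))$. Your harmonic-mean reading of $\bar\gamma$ settles a point the paper leaves undefined. With an arithmetic mean, Jensen gives $C^{-1}\sum_y\gamma_y^{-1}\ge 1/\bar\gamma$, so the class-averaged shift term would not be bounded by $2Ld_{\max}/\bar\gamma$.

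Several steps on the way there are wrong and should be cut rather than patched.

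\textbf{The sandwich lower bound.} Its lower half is not merely unavailable from earlier results; it is false. The variance term comes from a one-sided Chebyshev bound. At $d_{\max}=0$ your lower bound would assert $\bar e_T\ge 4h(\STAMP)>0$, which fails for any model with zero error but nonzero within-class spread. The Lipschitz coupling controls only the source-to-target change, not this Chebyshev slack. Under the surrogate reading you need no sandwich at all: $2Ld_{\max}\bigl(1/\bar\gamma(f_1)-1/\bar\gamma(f_2)\bigr)\le 2Ld_{\max}/\min(\bar\gamma(f_1),\bar\gamma(f_2))$ directly.

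\textbf{The constant $8$.} Your route to it double counts. The term $2Ld_{\max}/\gamma_y$ in Lemma~\ref{lem:margin} is already what the displacement $\gamma_y^t\ge\gamma_y-2Ld_{\max}$ produces, so charging it again is unjustified. It is also unnecessary: any constant $c\le 8$ implies the stated inequality, and the paper's $8$ is itself loose by a factor of $4$.

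\textbf{The role of A5(ii).} It is not needed to replace each $1/\bar\gamma(f_k)$ by $1/\min$; that step is trivial. In the paper A5(ii) does something else. If $\bar\gamma(f_1)\ge\bar\gamma(f_2)$, the shift difference is itself nonpositive, so it reinforces the variance ordering and the surrogate inequality holds even without the residual.

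\textbf{The ``up to $\mathrm{CV}^2$'' phrasing.} Your alternative wording ``up to the $\mathrm{CV}^2$ relative error'' cannot carry the ordering. With $\mathrm{CV}^2\approx1.14>1$, Lemma~\ref{lem:approx} only places each model's $\bar r$ in $[0,\,2.14\,S_W/S_B]$, which is compatible with either order. Only the literal surrogate reading, which is what the paper uses, yields a meaningful comparison.
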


\begin{proof}
Using Lemma~\ref{lem:margin},
\[
\bar e_T(f)
\approx
4\frac{1-\eta_f^2}{\eta_f^2}
+
\frac{2Ld_{\max}}{\bar\gamma(f)}.
\]
For $\eta_1^2>\eta_2^2$, the first difference is strictly negative
because
\[
\frac{d}{d\eta^2}
\frac{1-\eta^2}{\eta^2}
=
-\frac{1}{(\eta^2)^2}<0.
\]
The shift contribution is bounded by
\[
\left|
2Ld_{\max}
\left(
\frac1{\bar\gamma(f_1)}
-\frac1{\bar\gamma(f_2)}
\right)
\right|
\le
\frac{8Ld_{\max}}
{\min(\bar\gamma(f_1),\bar\gamma(f_2))}.
\]
Under A5(ii), the centroid margin is non-decreasing in \STAMP{}
rank, so the shift term reinforces rather than reverses the ordering.
\hfill$\square$
\end{proof}

The direction induced by $(1-\STAMP)/\STAMP$ is exact; only the magnitude inherits the bounded approximation error $\mathrm{CV}^2$.

\subsection{Approximate Rank Consistency and Failure Modes} \label{app:proposition}

\begin{proposition}[Approximate rank consistency] \label{prop:rank}

Under A1--A5, higher \STAMP{} is monotonically associated with higher target accuracy whenever the empirical margin ordering in A5(ii) holds. The monotonicity of $(1-\STAMP)/\STAMP$ is exact, while the full model ordering is empirical.
\end{proposition}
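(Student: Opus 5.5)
The plan is to obtain Proposition~\ref{prop:rank} as a corollary of Lemma~\ref{lem:fdr}, Lemma~\ref{lem:margin}, Lemma~\ref{lem:approx} and Proposition~\ref{prop:endtoend}. The argument separates an exact algebraic monotonicity claim from an empirical ordering claim, matching the two sentences of the statement.

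\textbf{Step 1: the exact part.} By Lemma~\ref{lem:fdr}, $\STAMP$ estimates $\eta^2=S_B/S_T\in[0,1]$. Since $S_T=S_W+S_B$, we have $S_W/S_B=(1-\eta^2)/\eta^2=h(\eta^2)$. On $(0,1]$, $h'(u)=-1/u^2<0$, so $h$ is strictly decreasing. Hence for two models with $\STAMP(f_1)>\STAMP(f_2)>0$, the variance term $4h(\STAMP)$ in \eqref{eq:avg_bound} is strictly smaller for $f_1$. This is the claimed exact monotonicity, and it needs no assumption beyond $S_T>0$, which excludes failure mode (iii).

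\textbf{Step 2: from the variance term to target error.} I would invoke Lemma~\ref{lem:margin} under A1--A4, A5$'$ and A6 to bound $P_t(\mathrm{error}\mid y)$ by a variance term plus the shift term $2Ld_{\max}/\gamma_y$. Averaging over classes and applying Lemma~\ref{lem:approx} replaces $\bar r$ by $S_W/S_B$ up to relative error $\mathrm{CV}^2$.

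\textbf{Step 3: the empirical ordering.} Proposition~\ref{prop:endtoend} then gives $\mathbb E[\bar e_T(f_1)]\le\mathbb E[\bar e_T(f_2)]+\Delta_{\mathrm{shift}}$. To upgrade this to monotonic association, I would use A5(ii): if $\bar\gamma$ is non-decreasing in $\STAMP$ rank, then $2Ld_{\max}/\bar\gamma(f_1)\le 2Ld_{\max}/\bar\gamma(f_2)$. The shift term therefore has the same sign as the variance term, and the sum of two ordered terms preserves the ordering. Taking target accuracy as $1-\bar e_T$ then yields the association.

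\textbf{Main obstacle.} The main difficulty is that \eqref{eq:avg_bound} is only an approximation with relative error up to $\mathrm{CV}^2\approx1.14$, not a two-sided identity. An upper bound on error cannot by itself order the true errors. So the claim must be phrased as monotonic association of the bound, or of its approximation, with $\STAMP$, not as a deterministic ordering of realized accuracies.

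To handle this honestly, I would state the conclusion only under A4, read as saying that error is monotone in the pair $(\sigma^2_{y,\mathrm{proj}}/\gamma_y^2,\gamma_y)$. Under that reading, ordering both arguments via Step 1 and A5(ii) orders the error. The residual, covering class heterogeneity and calibration, is explicitly labeled empirical. That is precisely why the statement says the full model ordering is empirical.
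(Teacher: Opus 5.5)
Your proposal is correct and follows essentially the same route as the paper. The paper combines the exact strict decrease of $(1-\STAMP)/\STAMP$ on $(0,1]$ (taken from the proof of Proposition~\ref{prop:endtoend}) with A5(ii), so that the shift term $2Ld_{\max}/\bar\gamma$ reinforces rather than reverses the ordering, and it assigns the remaining discrepancy (Chebyshev slack, class heterogeneity with $\mathrm{CV}=1.07$, and calibration) to the empirical part of the claim. Your remark that an upper bound cannot by itself order realized errors, together with your use of A4 as a direct monotonicity in the variance ratio and the margin, spells out what the paper leaves implicit when it calls the full model ordering empirical.
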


The remaining approximation arises from three sources: Chebyshev slack, class heterogeneity ($\mathrm{CV}=1.07$), and calibration differences. The observed rank correlations (Section~\ref{sec:theory}) support this structural prediction while preserving the distinction between the exact algebraic monotonicity of $(1-\STAMP)/\STAMP$ and the empirical conditions required for full OOD rank preservation.

The theory predicts three failure modes. First, class mismatch can break the semantic pairing assumption: macro \STAMP{} on RSNA gives $\rho=0.311$, while class-matched Pneumonia-only \STAMP{} gives $\rho=0.663$ ($p<10^{-4}$). Second, concept drift violates A1 and therefore invalidates the guarantee. Third, degenerate predictions make $S_T\approx0$; such models are excluded by the non-degeneracy criterion in Section~\ref{sec:natural}.

\hfill$\square$

\subsection{Partial Correlation and Shift Magnitude}
\label{app:lemma3}

\begin{proposition}[Partial-correlation dependence]
\label{prop:partial}
Under A1--A5, the partial association
$\rho(\STAMP,\mathrm{acc}_t\mid\mathrm{acc}_s)$ is expected to
increase with shift magnitude $d_{\max}$.
\end{proposition}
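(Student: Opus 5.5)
We plan to argue from the decomposition of the class-averaged target error supplied by Lemma~\ref{lem:margin} and Lemma~\ref{lem:approx}. For each model $f$ we write
\[
\bar e_T(f)\approx 4\,h(\STAMP(f)) + \frac{2Ld_{\max}}{\bar\gamma(f)},\qquad h(s)=\frac{1-s}{s},
\]
and take the source error to be the same expression at $d_{\max}=0$ (the source Chebyshev bound), so that $\bar e_S(f)\approx 4h(\STAMP(f))+\epsilon_S(f)$. Here $\epsilon_S(f)$ collects the model-specific slack: Chebyshev looseness, the class heterogeneity with relative size at most $\mathrm{CV}^2$, and calibration. The target error then splits into a component already visible in source accuracy and a shift component $\Delta(f)=2Ld_{\max}/\bar\gamma(f)$, which scales linearly in $d_{\max}$.

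The key step is to model $\mathrm{acc}_t=\mathrm{acc}_s-\Delta(f)+\xi(f)$ across the model population, with $\xi$ a residual independent of $d_{\max}$. We then compute the partial association after regressing out $\mathrm{acc}_s$. The $\mathrm{acc}_s$ component carries no partial information. What remains is the covariance between $\STAMP$ and $-\Delta(f)$, and we evaluate it in a linearized (Gaussian-copula) approximation of the Spearman partial, using the Kendall formula from Section~\ref{sec:experiments}.

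Under A5(ii), $\bar\gamma$ is non-decreasing in $\STAMP$ rank. It follows that $-\Delta(f)$ is co-monotone with $\STAMP$, so its covariance with $\STAMP$ (after the $\mathrm{acc}_s$ residualization) is nonnegative. Because this covariance is proportional to $d_{\max}$ while the residual variance of $\xi$ does not depend on $d_{\max}$, the partial correlation has the form
\[
\frac{a\,d_{\max}}{\sqrt{b+c\,d_{\max}^2}},\qquad a,c\ge 0,\; b>0.
\]
This expression is non-decreasing in $d_{\max}$, which gives the stated monotone trend.

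The main obstacle is that the decomposition holds only approximately. The bound in Lemma~\ref{lem:margin} is an upper bound rather than an identity, the passage to the class average carries $\mathrm{CV}^2$ error, and rank correlation is not linear. We therefore intend to present the result as a statement \emph{in expectation} under a linear-Gaussian surrogate for ranks, in keeping with the heuristic status the paper assigns it in Section~\ref{sec:theory}.

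A second subtlety is that $\mathrm{acc}_s$ itself depends on $\STAMP$ through $h$. We handle this by first conditioning on the $h$ component, which is shared by source and target and therefore cancels in the partial correlation. Only the shift term then drives the dependence.
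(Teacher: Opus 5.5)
Your starting point is the paper's: the shift term $\Delta(f)=2Ld_{\max}/\bar\gamma(f)$ in \eqref{eq:avg_bound} is linear in $d_{\max}$ and is tied to \STAMP{} through A5(ii). The paper stops there, with $\partial\bar e/\partial d_{\max}=2L/\bar\gamma>0$ and ``shift exposes variation captured by \STAMP{} but not by source accuracy,'' and labels the claim a structural heuristic. Your quantitative step breaks at ``what remains is the covariance between \STAMP{} and $-\Delta(f)$.'' After residualizing on $\mathrm{acc}_s$ you drop $\mathrm{Cov}(\STAMP^\perp,\xi^\perp)$ and $\mathrm{Cov}(\Delta^\perp,\xi^\perp)$, and in your own decomposition these terms do not vanish.

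From $\bar e_T\approx 4h(\STAMP)+\Delta$ and $\bar e_S\approx 4h(\STAMP)+\epsilon_S$ you get $\xi=\epsilon_S$. But $\epsilon_S$ is exactly what makes $\STAMP^\perp$ non-degenerate: without it $\mathrm{acc}_s$ is a strictly increasing function of \STAMP{}. In the linear surrogate the residuals of $-4h(\STAMP)$ and of $\epsilon_S$ on $\mathrm{acc}_s$ coincide.
\begin{itemize}
\item At $d_{\max}=0$ your model gives $\mathrm{acc}_t=1-4h(\STAMP)$, which is increasing in \STAMP{}. With $r=\rho(\STAMP,\mathrm{acc}_s)$, Kendall's formula returns $(1-r^2)/(1-r^2)=1$, so the partial correlation is already maximal at zero shift.
\item For general $d_{\max}$ it equals $\mathrm{Corr}(X,X+d_{\max}Y)$, with $X$ the residual of $-4h(\STAMP)$ and $Y$ the residual of $-2L/\bar\gamma$. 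Its derivative in $d_{\max}$ has the sign of $d_{\max}\bigl(\mathrm{Cov}(X,Y)^2-\mathrm{Var}(X)\,\mathrm{Var}(Y)\bigr)\le 0$, the opposite of the claim.
\item The other reading puts the same slack in the target, as your phrase ``the same expression at $d_{\max}=0$'' suggests. Then $\mathrm{acc}_t=\mathrm{acc}_s-\Delta$, and the partial correlation is $\mathrm{Corr}(\STAMP^\perp,-\Delta^\perp)$. This is invariant to rescaling $\Delta$ by $d_{\max}$, so it is constant.
\end{itemize}

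So the form $a\,d_{\max}/\sqrt{b+c\,d_{\max}^2}$ with $b>0$ and no cross terms does not follow from A1--A5 and Lemmas~\ref{lem:margin}--\ref{lem:approx}. It needs an extra hypothesis: a target-specific residual whose variance does not depend on $d_{\max}$ and which is uncorrelated with both $\STAMP^\perp$ and $\Delta^\perp$.

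Your ``second subtlety'' paragraph does not supply this. Since $h$ is a bijection on $(0,1]$, conditioning on $h(\STAMP)$ is conditioning on \STAMP{} itself. Partialling out $\mathrm{acc}_s$ also does not cancel the $h$-term; it removes \STAMP{}'s component along $\mathrm{acc}_s$ and leaves precisely the slack-driven part.

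The sign $a\ge 0$ also needs care:
\begin{itemize}
\item If A5(ii) is read as exact co-monotonicity, $a\ge0$ holds in the Gaussian-copula surrogate. But then, in your literal model, $\mathrm{acc}_t$ is increasing in \STAMP{} for every $d_{\max}$, and the partial correlation is identically $1$.
\item If A5(ii) is only the empirical association ($\rho=0.73$), a positive raw association need not survive residualization on $\mathrm{acc}_s$, which is itself tightly linked to \STAMP{}.
\end{itemize}

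You can either present the argument at the paper's heuristic level, or state the orthogonal-noise condition and $\mathrm{Cov}(\STAMP^\perp,-\Delta^\perp)\ge 0$ as explicit assumptions; with those in place your calculation goes through.
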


\begin{proof}[Proof sketch]
From \eqref{eq:avg_bound}, the contribution of
$(1-\STAMP)/\STAMP$ to OOD error is amplified relative to source
accuracy as the shift term grows:
\[
\frac{\partial\bar e}{\partial d_{\max}}
=
\frac{2L}{\bar\gamma}>0.
\]
Thus increasing shift exposes variation captured by \STAMP{} but not
by source accuracy. This is a structural heuristic rather than a
formal monotonicity theorem.
\hfill$\square$
\end{proof}

Empirically,
\[
0.935\;(\text{ObjectNet})
>
0.804\;(\text{Dollar Street})
>
0.755\;(\text{ImageNet-A})
>
0.700\;(\text{ImageNet-R})
>
0.662\;(\text{ImageNet-C}),
\]
as reported in Table~\ref{tab:natural}.

\subsection{Finite-Sample Concentration}
\label{app:hoeffding}

\begin{lemma}[Estimator concentration]
\label{lem:estimator}
With $n$ stable pairs and $m$ random pairs,
\[
|\widehat{\STAMP}-\STAMP|
\le
\sqrt{\frac{2\log(4/\delta)}{n}}
+
\sqrt{\frac{2\log(4/\delta)}{m}}
\triangleq
\varepsilon(n,m,\delta)
\]
with probability at least $1-\delta$.
\end{lemma}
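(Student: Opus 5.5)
The plan is to treat $\widehat{\STAMP}=1-\mathrm{SV}_n/(\mathrm{AV}_m+\varepsilon)$ as a smooth function of two independent empirical means and control each mean separately by Hoeffding's inequality. The population targets are $s=\mathbb E[\mathrm{SV}]$ and $a=\mathbb E[\mathrm{AV}]$. For the class-conditional designs, Lemma~\ref{lem:fdr} identifies them as $2S_W$ and $2S_T$. The key boundedness fact is that $f_\theta(\mathbf x)\in\Delta^{C-1}$, so $\|f_\theta(\mathbf x)-f_\theta(\mathbf x')\|_2^2\le 2$. Each summand of $\mathrm{SV}$ and $\mathrm{AV}$ therefore lies in $[0,2]$.

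\textbf{Step 1: concentration of the two means.} For i.i.d.\ summands in $[0,2]$, Hoeffding gives $P(|\mathrm{SV}_n-s|>t)\le 2\exp(-nt^2/2)$. Setting the right-hand side to $\delta/2$ gives $t=\sqrt{2\log(4/\delta)/n}$, and the same argument gives $\sqrt{2\log(4/\delta)/m}$ for $\mathrm{AV}_m$. A union bound over the two events costs total probability $\delta$. This explains exactly the $\log(4/\delta)$ and the constant $2$ in $\varepsilon(n,m,\delta)$.

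\textbf{Step 2: from the means to the ratio.} We write
\[
\Bigl|\frac{\mathrm{SV}_n}{\mathrm{AV}_m}-\frac{s}{a}\Bigr|
\le\frac{|\mathrm{SV}_n-s|}{\mathrm{AV}_m}+\frac{s}{a}\cdot\frac{|\mathrm{AV}_m-a|}{\mathrm{AV}_m}.
\]
Since $s/a=S_W/S_T\le1$ by the ANOVA identity $S_T=S_W+S_B$, the second coefficient is at most $1/\mathrm{AV}_m$. The $\varepsilon=10^{-8}$ in the denominator changes the bound only by $O(\varepsilon/a)$, which we absorb.

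\textbf{Step 3: the main obstacle.} Both terms carry a factor $1/\mathrm{AV}_m$, which the clean stated bound does not display. The statement as written therefore holds only after a normalization or under a denominator condition. My first approach is to work on the good event, where $\mathrm{AV}_m\ge a-\sqrt{2\log(4/\delta)/m}$. Under the non-degeneracy condition $\mathrm{Var}_{P_s}f_\theta>0$, once $m$ is large enough that this lower bound exceeds $a/2$, the deviation is at most $(2/a)\,\varepsilon(n,m,\delta)$. I would then either state the lemma with this explicit $1/a=1/(2S_T)$ constant, or adopt the normalization $a\ge1$ (rescaling the squared distances by $a$), under which the displayed bound holds verbatim. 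This is the ``denominator control'' mentioned in Section~\ref{sec:theory}.

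I expect this denominator issue to be the only nontrivial point; the rest is routine. As with Lemma~\ref{lem:fdr}, the argument requires i.i.d.\ pairs. For temporal pairs I would note that Hoeffding still applies if distinct patients are sampled independently, since the pairs are then independent across $i$ even when the two images within a pair are dependent. The limit $s$ is then no longer $2S_W$, but the concentration statement is unaffected.
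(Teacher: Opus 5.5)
Your route is the paper's route. Its proof applies Hoeffding to the two pair averages at level $\delta/2$ each, takes a union bound, and then invokes ``the local Lipschitz property of $g(u,v)=1-u/v$.'' Your Step~3 is right, and it locates a hole in the paper's argument, not in yours. Since $\nabla g=(-1/v,\,u/v^2)$, the Lipschitz constant near $(s,a)$ is of order $1/a$, and it is unbounded as $v\to0$. The displayed bound silently sets this constant to $1$.

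No argument can avoid the factor, because the statement as written is false even under the hypotheses of Lemma~\ref{lem:fdr}. Let $f_\theta=\mathbf e_1$ except on a set of $P_s(\cdot\mid y)$-probability $p$ in every class, where $f_\theta=\mathbf e_2$. Then $\eta^2=0$, and each stable or random summand equals $2$ with probability $q=2p(1-p)$ and $0$ otherwise. Take $m=n$ and choose $p$ so that $(1-q)^n=1/2$. The event $\{\mathrm{SV}_n=0<\mathrm{AV}_m\}$ then has probability $1/4$, and on it $\widehat{\STAMP}=1$. For $\delta<1/4$ and $n>8\log(4/\delta)$, this deviation exceeds $\varepsilon(n,n,\delta)$. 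Incidentally, your range $[0,2]$ is the one that actually yields the stated constant; the paper's $[0,4]$ would give $\sqrt{8\log(4/\delta)/n}$.

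So the lemma must be restated, and only your first repair survives. On the Hoeffding event with $t_m:=\sqrt{2\log(4/\delta)/m}\le a/2$, your Step~2 gives $|\widehat{\STAMP}-\STAMP|\le(2/a)\,\varepsilon(n,m,\delta)$, up to the $O(10^{-8}/a)$ regularizer term. That is a correct statement.

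The ``normalization $a\ge1$'' does not work. Rescaling all squared distances by $1/a$ leaves $\widehat{\STAMP}$ and its target unchanged, since both are ratios. But it stretches the summand range from $[0,2]$ to $[0,2/a]$, so the Hoeffding widths grow by exactly the factor $1/a$ you meant to remove. Read instead as an assumption on the model, $a\ge1$ is very strong. Since $a\le2$, it forces near-one-hot, widely separated outputs, whereas the CE-supervised medical models in Table~\ref{tab:stamp_main} have $\mathrm{AV}\approx0.02$. Even then, $\mathrm{AV}_m$ can fall below $1$ on the good event, so the verbatim bound needs $a\ge1+t_m$.

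Finally, at $a\approx0.02$ the requirement $t_m\le a/2$ means $m\ge8\log(4/\delta)/a^2\approx9\times10^4$ for $\delta=0.05$. That is far beyond $N=2{,}000$. A useful bound in that regime needs a variance-sensitive (Bernstein-type) inequality for the two means, whose summands are concentrated near zero.
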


\begin{proof}
Because $f_\theta(\mathbf{x})\in\Delta^{C-1}$, squared output distances lie in $[0,4]$. Hoeffding bounds the stable and random pair averages separately with probability $1-\delta/2$. Applying the union bound and the local Lipschitz property of $g(u,v)=1-u/v$ gives the stated concentration.
\hfill$\square$
\end{proof}

At $n=m=500$ and $\delta=0.05$, $\varepsilon\approx0.067$. Across five seeds the empirical standard deviation is $<0.004$ (Table~\ref{tab:ablation}), approximately $5\times$ smaller than the conservative bound.
\subsection{Temporal Pairs}
\label{app:lemma4}

\begin{lemma}[Exact Lipschitz bound for temporal pairs]
\label{lem:temporal_full}
Let
$\boldsymbol\eta=\mathbf{x}^{t_2}-\mathbf{x}^{t_1}$ satisfy
$\mathbb E[\boldsymbol\eta\mid\mathbf{x}^{t_1},y]=0$ and
$\mathbb E\|\boldsymbol\eta\|_2^2\le\sigma_\eta^2$. Under A3,
\[
\mathbb E
\left[
\|f_\theta(\mathbf{x}^{t_1})
-f_\theta(\mathbf{x}^{t_2})\|_2^2
\right]
\le
L^2\mathbb E\|\boldsymbol\eta\|_2^2
\le
L^2\sigma_\eta^2.
\]
\end{lemma}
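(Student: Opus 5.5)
The plan is a pointwise Lipschitz estimate, integrated over the temporal pair distribution. Write $\mathbf{x}^{t_2}=\mathbf{x}^{t_1}+\boldsymbol\eta$ and treat $f_\theta$ as $L$-Lipschitz in the Euclidean norm on the segment $[\mathbf{x}^{t_1},\mathbf{x}^{t_2}]$.

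\textbf{Pointwise bound.} Assume for the moment that $f_\theta$ is differentiable a.e. along that segment. The fundamental theorem of calculus applied to $s\mapsto f_\theta(\mathbf{x}^{t_1}+s\boldsymbol\eta)$ gives
\[
f_\theta(\mathbf{x}^{t_2})-f_\theta(\mathbf{x}^{t_1})=\int_0^1 J_f(\mathbf{x}^{t_1}+s\boldsymbol\eta)\,\boldsymbol\eta\,ds .
\]
The triangle inequality for Bochner integrals together with A3 then yields $\|f_\theta(\mathbf{x}^{t_2})-f_\theta(\mathbf{x}^{t_1})\|_2\le L\|\boldsymbol\eta\|_2$. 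Squaring gives $\|f_\theta(\mathbf{x}^{t_1})-f_\theta(\mathbf{x}^{t_2})\|_2^2\le L^2\|\boldsymbol\eta\|_2^2$ pointwise.

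\textbf{Expectation.} Both sides are nonnegative and measurable, and the left side is bounded by $2$ because outputs lie in $\Delta^{C-1}$. Taking expectations under the joint law of $(\mathbf{x}^{t_1},\mathbf{x}^{t_2},y)$ is therefore legitimate and gives the first inequality. The second inequality is just the hypothesis $\mathbb E\|\boldsymbol\eta\|_2^2\le\sigma_\eta^2$.

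The conditional mean-zero assumption $\mathbb E[\boldsymbol\eta\mid\mathbf{x}^{t_1},y]=0$ is not needed for this inequality. I would remark that it matters only for interpretation: it identifies the bound with a conditional-variance term, which is what connects it to SV in Definition~\ref{def:stamp}.

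\textbf{Main obstacle.} The hard step is passing from an a.e. operator-norm bound on $J_f$ to a global Lipschitz bound. A null set of nondifferentiability points (e.g.\ from ReLU kinks) could in principle contain an entire segment. There are two ways to handle this.
\begin{enumerate}
\item Read A3 as saying $f_\theta$ is $L$-Lipschitz. This is standard for piecewise-smooth networks with a convex input domain: by Rademacher's theorem and a Fubini argument over parallel segments, a.e.\ Jacobian control yields the Lipschitz bound for continuous $f_\theta$. This is the first route I would try, citing the Lipschitz pushforward property already invoked in Appendix~\ref{app:setup}.
\item Alternatively, mollify $f_\theta$, apply the smooth argument above, and pass to the limit by continuity.
\end{enumerate}
Either way, the bound is labelled ``exact'' in the statement because it needs no Taylor remainder.
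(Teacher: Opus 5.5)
Your proposal is correct and follows the paper's proof. The paper reads A3 directly as the pointwise bound $\|f_\theta(\mathbf{x}^{t_1})-f_\theta(\mathbf{x}^{t_2})\|_2\le L\|\boldsymbol\eta\|_2$, squares it, and takes expectations; as you observed, it also never uses the mean-zero hypothesis. Your discussion of passing from an a.e.\ Jacobian bound to a global Lipschitz bound is extra rigor the paper omits, but that step needs $f_\theta$ locally Lipschitz (as ReLU-type networks are) on a convex domain, not merely continuous, since the Cantor function has zero derivative a.e.\ without being constant.
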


\begin{proof}
A3 gives
\[
\|f_\theta(\mathbf{x}^{t_1})
-f_\theta(\mathbf{x}^{t_2})\|_2
\le
L\|\boldsymbol\eta\|_2.
\]
Squaring and taking expectations proves the result.
\hfill$\square$
\end{proof}

Unlike a first-order Taylor approximation, this result is exact and requires no Hessian or residual control. labeled-pair and temporal \STAMP{} estimate different functionals; temporal \STAMP{} is not a consistent estimator of labeled-pair $S_W$. Rank preservation instead relies on A5(iii). Within-family correlations between $\hat L_\theta$ and temporal \STAMP{} are non-significant (Table~\ref{tab:identifiability}). Temporal \STAMP{} exceeds labeled-pair \STAMP{} for all 31 custom models, with mean $\Delta=+0.161$ (Table~\ref{tab:temporal_delta}), consistent with label-noise inflation of labeled-pair SV.

\subsection{Comparison to Existing Proxy Metrics}\label{app:baselines}

\begin{table}[!ht]
\centering
\caption{Theoretical properties of source-only OOD proxy metrics. ``Target-free'' denotes no target-domain data; ``closed-form'' denotes an analytically available generalization bound in terms of the metric.} \label{tab:proxy_comparison}
\small
\setlength{\tabcolsep}{4pt}
\begin{tabular}{lccccc}
\toprule
\textbf{Method}
& \textbf{Label-free}
& \textbf{Target-free}
& \textbf{Arch.-agnostic}
& \textbf{Closed-form}
& $O(n)$ \textbf{fwd.}\\
\midrule
LogME~\citep{you2021logme}
& \xmark & \xmark & \cmark & \xmark & \cmark\\
LEEP~\citep{nguyen2020leep}
& \xmark & \xmark & \cmark & \xmark & \cmark\\
H-score~\citep{bao2019information}
& \xmark & \xmark & \cmark & \xmark & \cmark\\
$\mathcal H\Delta\mathcal H$~\citep{ben2010theory}
& \cmark & \xmark & \cmark & \cmark & \xmark\\
FID~\citep{heusel2017gans}
& \cmark & \xmark & \cmark & \xmark & \cmark\\
MMD~\citep{gretton2012kernel}
& \cmark & \xmark & \cmark & \xmark & \cmark\\
Inside-Out DDB~\citep{peng2026inside}
& \cmark & \cmark & \xmark & \xmark & \xmark\\
\midrule
\textbf{\STAMP{} (ours)}
& \cmark & \cmark & \cmark & \cmark & \cmark\\
\bottomrule
\end{tabular}
\end{table}

\subsection{Empirical Verification of Theoretical Conditions}\label{app:verification}

\begin{table}[!ht]
\centering
\caption{Verification of identifiability conditions A5(ii) (centroid-gap
ordering, left) and A5(iii) (Lipschitz proxy vs.\ temporal \STAMP{},
right). Within-family Lipschitz correlations are non-significant, while
the pooled correlation reflects architecture-family differences.}
\label{tab:identifiability}
\small
\begin{tabular}{lcclrrr}
\toprule
\multicolumn{3}{c}{\textbf{A5(ii): centroid gap vs.\ STAMP rank}}
& \phantom{xx}
& \multicolumn{3}{c}{\textbf{A5(iii): $\hat L_\theta$ vs.\ \STAMP{}}}\\
\cmidrule{1-3}\cmidrule{5-7}
\textbf{Family} & $\rho$ & $p$
&& \textbf{Family} & $\rho$ & $p$\\
\midrule
All ($n{=}40$) & $+0.73$ & $<0.001$
&& CNN family & $+0.236$ & $0.397$\\
CNN ($n{=}15$) & $+0.61$ & $0.016$
&& ViT/Modern & $+0.182$ & $0.499$\\
ViT/Modern ($n{=}16$) & $+0.68$ & $0.004$
&& All (pooled) & $+0.545$ & $<0.001$\\
 & & && labeled \STAMP{} (pooled) & $+0.594$ & $<0.001$\\
\bottomrule
\end{tabular}
\end{table}

The pooled correlation is expected because architecture families differ in sensitivity, whereas the non-significant within-family correlations indicate that Lipschitz sensitivity does not explain the within-family \STAMP{} ordering.

\subsection{Additional Empirical Checks}

For 34/35 models (97\%), mean cosine similarity between same-patient embeddings exceeds that of cross-patient embeddings (Mann--Whitney $p<10^{-200}$, Cliff's $\delta\ge0.625$). The sole exception, TXV-ResNet-ae, produces constant predictions and is therefore a degenerate-output case rather than evidence against A1. Across four medical OOD datasets and 44 models, OOD error $(1-\mathrm{AUROC})$ increases monotonically with $(1-\STAMP)/\STAMP$ (all $p<10^{-10}$), matching the direction predicted by Lemma~\ref{lem:margin}.

\section{Additional Ablations and Robustness Analyses}\label{app:ablation}

\subsection{Full Component Ablation}\label{app:component}

Table~\ref{tab:ablation} gives the complete component ablation underlying Section~\ref{sec:ablation}. SV alone is negatively correlated ($\rhosp{=}{-}0.19$): models with higher total instability inflate AV, masking a poor SV. AV alone captures some signal ($\rhosp{=}0.64$) since models using more output space tend to generalise better, but conflates within-class and between-class variance. The ratio $1{-}\mathrm{SV}/\mathrm{AV}$ jointly rewards low within-class and high between-class variance, achieving $\rhosp{=}0.89$.

\begin{table}[!ht]
\centering
\caption{Component ablation on VinDr-CXR ($n{=}40$, custom models with
full ablation caches).}
\label{tab:ablation}
\small
\begin{tabular}{lcc}
\toprule
\textbf{Variant} & \textbf{$\rhosp$} & $p$ \\
\midrule
$-$SV only               & $-0.19$ & $0.24$ \\
AV only                  & $+0.64$ & $<0.001$ \\
SV/AV (raw ratio)        & $-0.89$ & $<10^{-10}$ \\
\STAMP{} $= 1{-}$SV/AV  & $+0.89$ & $<10^{-10}$ \\
\bottomrule
\end{tabular}
\end{table}

\subsection{Temporal vs.\ Labeled Pairs}\label{app:temporal}

Temporal-only \STAMP{} (\STAMP-T), using consecutive same-patient pairs regardless of label change, uniformly exceeds labeled \STAMP{} by $+0.110$ to $+0.186$ (mean $\Delta{=}0.161$) over all 31 custom models (Figure~\ref{fig:ablationtemporal}). We attribute this to NIH label noise ($\approx10$--$15\%$, estimated as the fraction of label-changed same-patient pairs in our temporal pool). Label-changed pairs contaminate the semantic pool, inflating SV. \STAMP-T avoids this by design and achieves strictly higher OOD correlations on all four medical datasets.

\begin{figure}[!ht]
\centering
\includegraphics[width=0.95\textwidth]{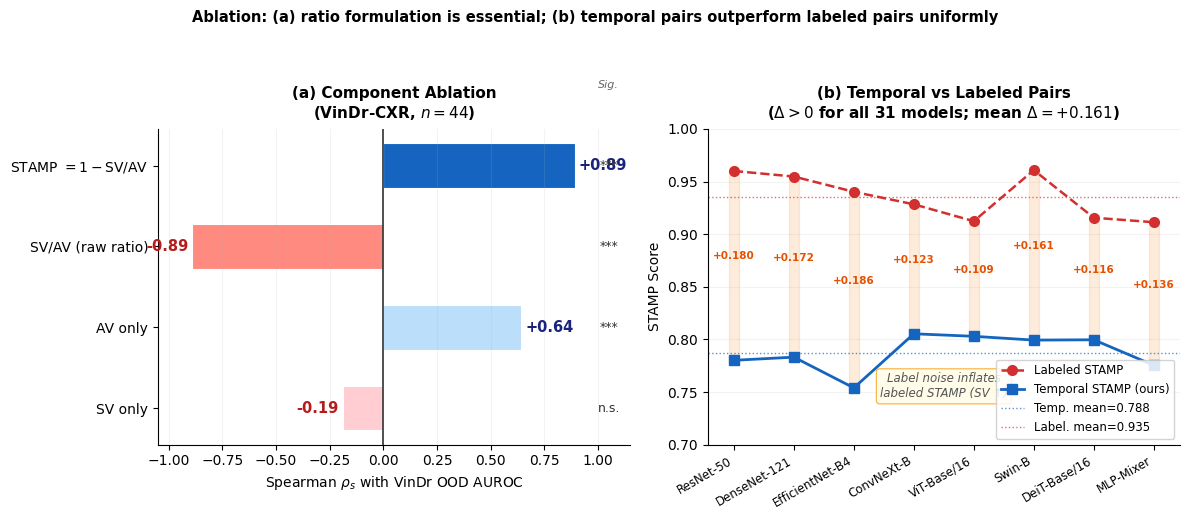}
\caption{(a) Component ablation: the full ratio formulation $\STAMP=1-\mathrm{SV}/\mathrm{AV}$ is necessary; neither component alone is predictive. (b) Temporal (unlabeled same-patient) pairs uniformly outperform label-matched pairs, consistent with NIH label-noise contamination of the labeled pair pool.}
\label{fig:ablationtemporal}
\end{figure}

\subsection{Follow-up Gap and Pair Count Sensitivity}\label{app:robustness}

Three representative probes (ResNet-18, ViT-Tiny/16, DINOv2-B/14) were evaluated with pair pools restricted to maximum follow-up gap of $\{1,2,3,5,10\}$ visits. \STAMP{} is invariant across this range (identical to four decimal places for all three probes), confirming the signal is not an artefact of temporal proximity within the gap range studied. Across 5 random seeds, mean standard deviation of \STAMP{} is $<0.004$ for all 31 models, confirming $N{=}2{,}000$ pairs provides sufficient stability. On ObjectNet, $N$ sensitivity was tested at $\{500,1000,2000,5000\}$; rank order is preserved at all counts.

\subsection{Controlled Architecture Experiment}\label{app:controlled}

To isolate pretraining objective from architecture, we compare five ViT-B/16-family checkpoints (CE-supervised ViT-B and DeiT-B vs.\ SSL MoCov3/MAE/DINO vs.\ VLM CLIP) that share essentially the same backbone (Figure~\ref{fig:controlledvit}). STAMP rank matches the OOD rank exactly on VinDr ($\rhosp{=}1.000$) and near-exactly on CheXpert ($\rhosp{=}0.986$) and MIMIC ($\rhosp{=}0.886$), showing the pretraining-objective effect is not an architecture confound.

\begin{figure}[!ht]
\centering
\includegraphics[width=0.95\textwidth]{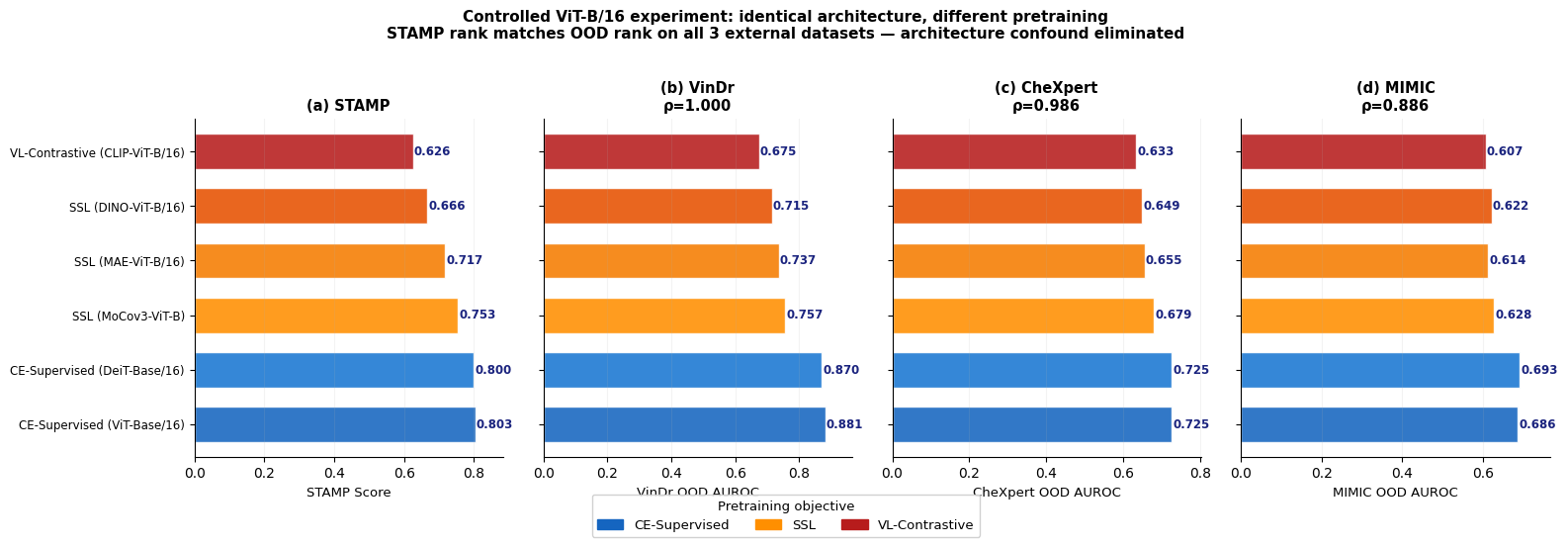}
\caption{Controlled architecture experiment: five ViT-B/16-family checkpoints differing only in pretraining objective. STAMP rank reproduces the OOD rank exactly on VinDr and near-exactly on CheXpert/MIMIC, showing the pretraining-objective effect is not an architecture confound.}
\label{fig:controlledvit}
\end{figure}

\subsection{Regime Analysis}\label{app:regime}

Grouping the 44 models by pretraining regime (NIH-CNN, NIH-ViT, NIH-Meta, Foundation-model probes, TorchXRayVision) rather than the coarser CE-vs-SSL split reveals a within-CE hierarchy (Figure~\ref{fig:regime}): NIH-ViT models have both the highest mean STAMP ($0.793\pm0.007$) and highest mean VinDr AUROC ($0.871\pm0.009$), narrowly ahead of NIH-CNN ($0.782\pm0.012$, $0.847\pm0.020$) and NIH-Meta ($0.784\pm0.006$, $0.857\pm0.011$). All three NIH-trained regimes separate completely from the Foundation-model probes (Kruskal--Wallis $H{=}29.1$, $p{<}10^{-4}$; Cliff's $\delta{=}+1.0$ for every NIH-trained-vs-Foundation comparison), while differences among the three NIH-trained regimes themselves are not significant (all pairwise MWU $p{>}0.65$).

\begin{figure}[!ht]
\centering
\includegraphics[width=0.95\textwidth]{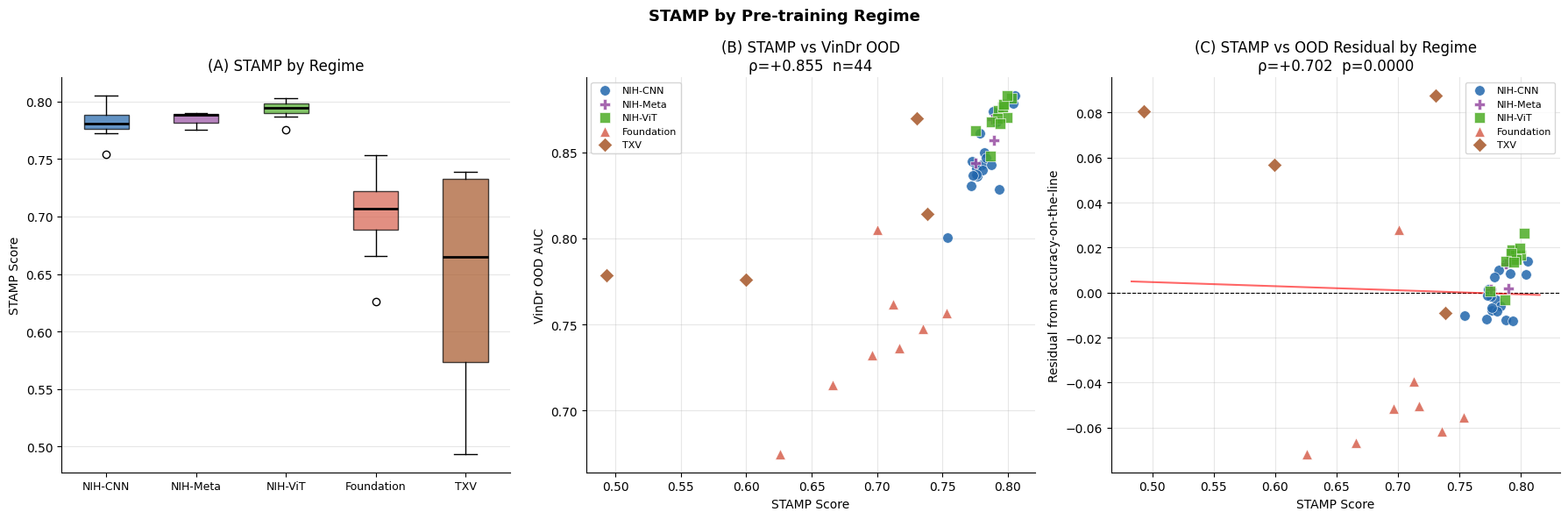}
\caption{STAMP by pretraining regime. NIH-ViT models edge out NIH-CNN and NIH-Meta in both STAMP and VinDr AUROC; all three NIH-trained regimes separate completely from Foundation-model probes. Panel (C) shows STAMP also predicts the residual from the population accuracy-on-the-line, not just the raw AUROC.}
\label{fig:regime}
\end{figure}

\subsection{Partial Correlation Analysis}\label{app:partial}

To rule out the confound that \STAMP{} merely proxies model capacity
or source-domain performance, we compute partial Spearman correlations
controlling for NIH macro-AUROC and parameter count (all at $n{=}44$,
except where one parameter count is missing, $n{=}43$):
\begin{itemize}
  \item Raw $\rhosp(\STAMP,\text{VinDr})$: $+0.855$ ($p{=}1.4{\times}10^{-13}$, $n{=}44$)
  \item Partial $\rhosp$ (ctrl NIH AUROC): $+0.545$ ($p{=}1.3{\times}10^{-4}$, $n{=}44$)
  \item Partial $\rhosp$ (ctrl NIH AUROC + params): $+0.536$ ($p{=}2.1{\times}10^{-4}$, $n{=}43$)
\end{itemize}
The signal barely attenuates when parameter count is added as a covariate, confirming \STAMP{} captures genuine OOD signal beyond model scale.

\subsection{Feature-Level STAMP}\label{app:layerwise}

Applying STAMP at intermediate layers (Figure~\ref{fig:layerwise}) reveals that ViT-family and Swin-family models exhibit increasing STAMP from early to penultimate layers ($\Delta_{\mathrm{deep}-\mathrm{early}}\in[+0.12,+0.22]$), consistent with hierarchical representation learning. CNN-family models show flatter trajectories. The MLP-Mixer exhibits the largest single-layer jump at the penultimate layer ($\Delta{\approx}+0.41$), suggesting late-stage specialisation.

\begin{figure}[!ht]
\centering
\includegraphics[width=0.95\textwidth]{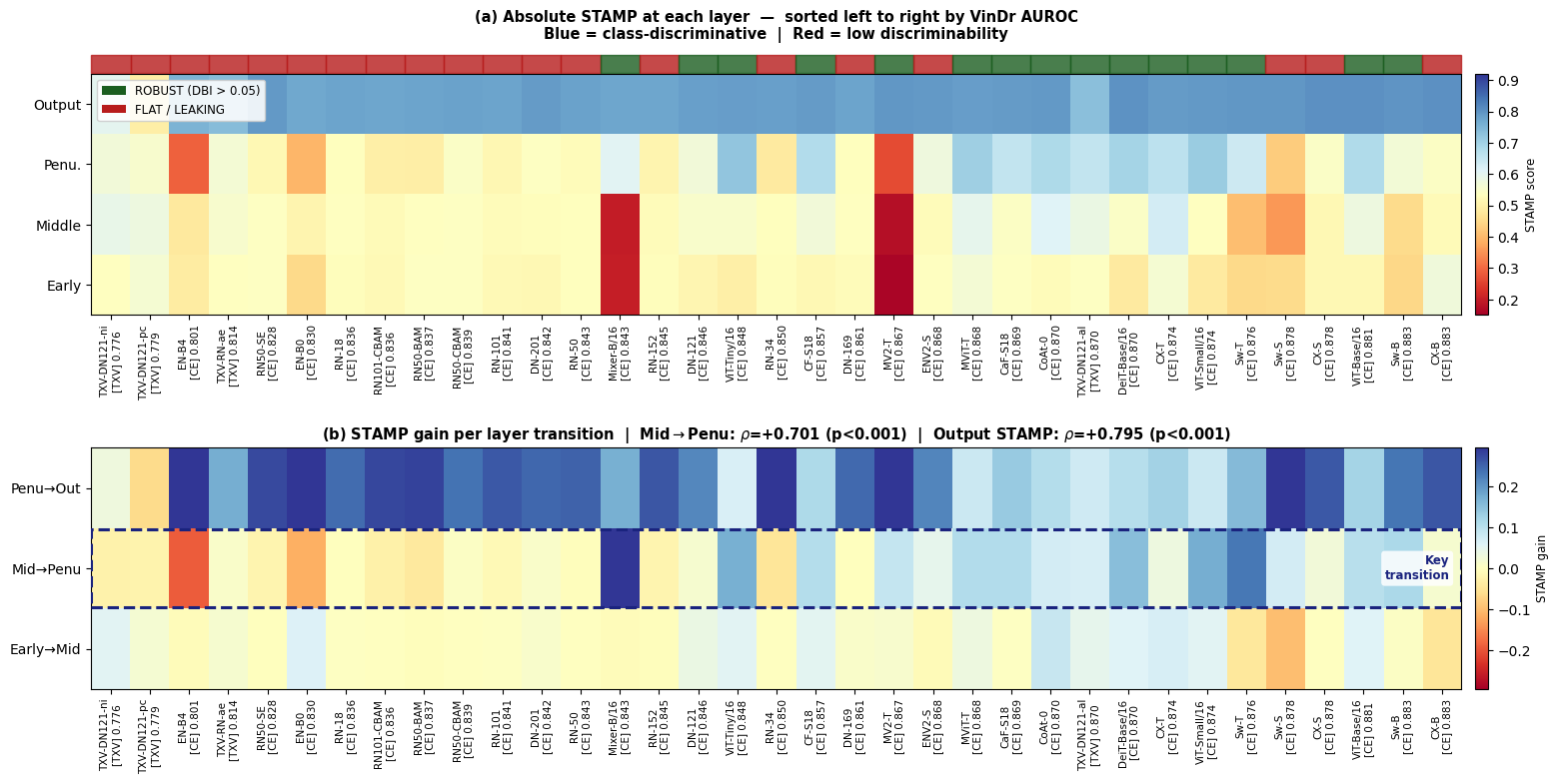}
\caption{Layer-wise STAMP ($n{=}35$ models with feature caches). (a) Absolute STAMP at each layer, sorted by VinDr AUROC. (b) STAMP gain per layer transition; the mid-to-penultimate transition is the key predictive signal.}
\label{fig:layerwise}
\end{figure}

A ridge regression of rank-transformed layer-wise features against VinDr AUROC (Figure~\ref{fig:motif}) identifies the mid-to-penultimate STAMP gain and the final-layer STAMP score as the two dominant predictors ($\rhosp{=}+0.701$ and $\rhosp{=}+0.795$ respectively, $n{=}35$), a pattern we term the \emph{Universal Generalization Motif}: robust models build semantic discriminability late, between the middle and penultimate layers, rather than carrying it from the input.

\begin{figure}[!ht]
\centering
\includegraphics[width=0.75\textwidth]{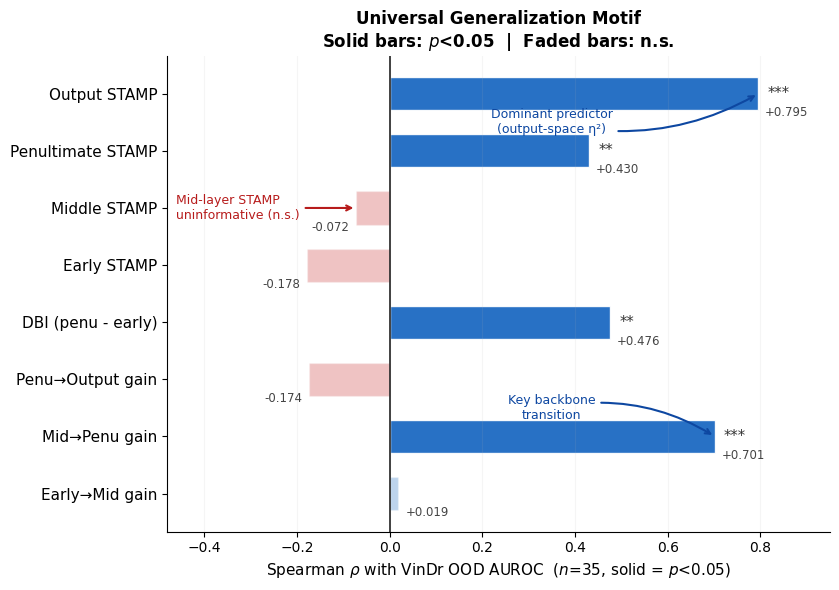}
\caption{The Universal Generalization Motif: output-layer STAMP and the mid-to-penultimate STAMP gain are the two dominant layer-wise predictors of VinDr OOD AUROC ($n{=}35$); early- and mid-layer STAMP alone are uninformative.}
\label{fig:motif}
\end{figure}

\subsubsection{Nuisance Suppression}

Among the 35 models with feature-level caches, the ratio of augmentation-induced to random-pair output variance (SV$_{\mathrm{aug}}$/SV$_{\mathrm{random}}$, the \emph{nuisance ratio}) is negatively correlated with both STAMP ($\rhosp{=}{-0.386}$, $p{=}0.022$) and VinDr AUROC ($\rhosp{=}{-0.393}$, $p{=}0.020$; Figure~\ref{fig:nuisance}): models that suppress non-semantic augmentation noise more strongly in output space also generalize better, independently corroborating the mechanism STAMP is designed to detect.

\begin{figure}[!ht]
\centering
\includegraphics[width=0.75\textwidth]{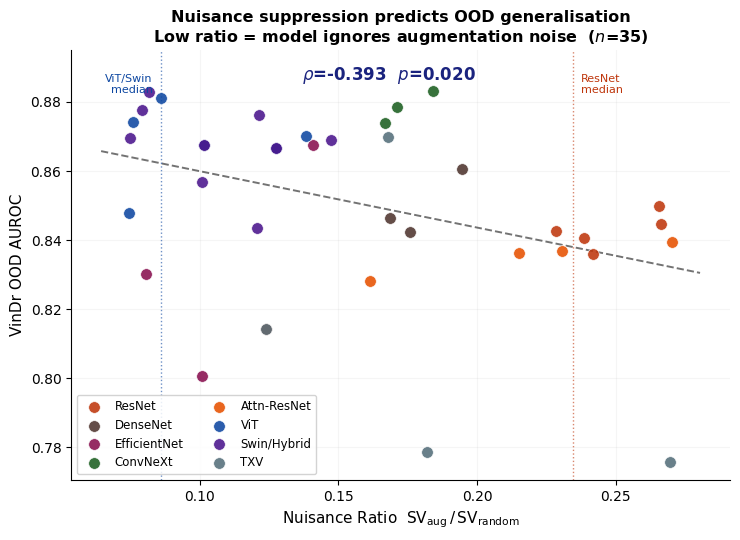}
\caption{Nuisance ratio (SV$_{\mathrm{aug}}$/SV$_{\mathrm{random}}$) is negatively correlated with both STAMP and VinDr OOD AUROC ($n{=}35$), corroborating that models which suppress non-semantic augmentation noise in output space also generalize better (Section~\ref{sec:ablation}).}
\label{fig:nuisance}
\end{figure}

\subsubsection{Negative Control Ordering}

We test whether stable variance ordering follows the predicted pattern B (augmentation pairs) $\leq$ A (temporal pairs) $\leq$ C (same-patient, label-changed) $\leq$ D (random pairs). Among the 35 models with negative-control caches, only 13/35 satisfy the full ordering exactly; the modal violation is $C$ falling below $A$, i.e.\ same-patient label-changed pairs are frequently \emph{more} stable in output space than same-patient same-label pairs. This is a genuinely mixed result: it shows the individual SV components do not always separate as cleanly as the aggregate STAMP ratio does, and we report it here rather than selectively on the subset that satisfies the ordering. It does not affect the main STAMP correlations, which use only the A/D (temporal vs.\ random) contrast throughout.

\subsection{Pre-Deployment Model Screening: Full Threshold Analysis}\label{app:screening}

Framing model selection as a binary alarm task (fail if VinDr AUROC $<\delta$, with a 35\% calibration hold-out, Figure~\ref{fig:alarm}), \STAMP{} achieves Precision$\,{=}\,1.0$ (zero false alarms) at clinically strict thresholds ($\delta\leq0.821$), a $+0.21$ to $+0.33$ F1 advantage over the always-alarm baseline ($2p/(1{+}p)$). At the strict operating point $\delta{=}0.808$--$0.821$, \STAMP{} correctly identifies every predicted failure with no false positives, enabling zero-risk model rejection before any target data arrive. At higher prevalence (most test models failing, $\delta\geq0.87$) the advantage over always-alarm narrows to $+0.03$, since a trivial always-fail rule becomes hard to beat; STAMP's value is concentrated in the low-to-moderate failure-rate regime relevant to routine deployment screening. Note the correct comparison with Inside-Out is at the pre-deployment model selection task (their DDB metric); the CSS metric monitors a single fixed model post-deployment, which is a categorically different problem, and the two numbers should not be placed on the same axis.

\begin{figure}[!ht]
\centering
\includegraphics[width=0.85\textwidth]{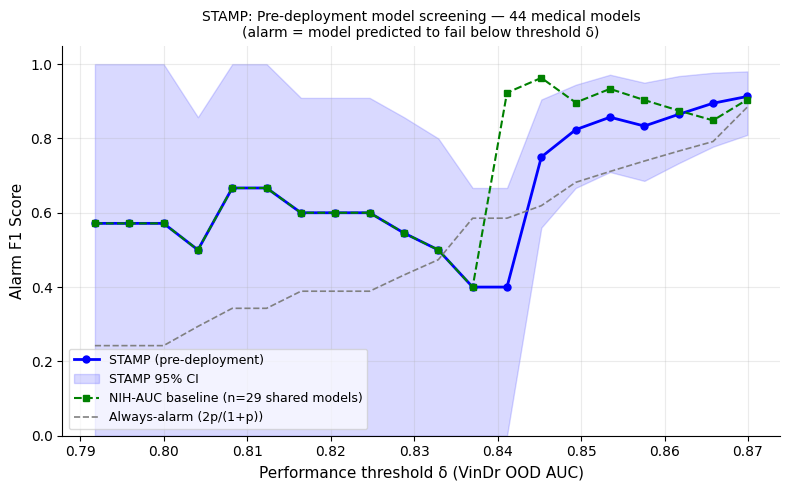}
\caption{Pre-deployment screening: Alarm F1 for STAMP vs.\ the always-alarm baseline as a function of the failure threshold $\delta$. STAMP achieves Precision$=1.0$ (zero false alarms) in the clinically strict regime $\delta \leq 0.821$.}
\label{fig:alarm}
\end{figure}

\subsection{Failure Analysis: Full Case Detail}\label{app:failure}

We examine the largest STAMP–VinDr rank disagreements ($|\text{rank}_{\STAMP}-\text{rank}_{\mathrm{VinDr}}|\geq15$) and identify two diagnosable failure modes. First, ResNet50-SE ($\Delta{=}21$) produces artificially low stable variance, consistent with channel-attention blocks suppressing spatial variation; this stability is specific to the NIH longitudinal distribution and does not transfer to VinDr. Second, DeiT-Base/16 and MViTv2-T ($\Delta{=}20$ for both) exhibit highly confident outputs (maximum sigmoid probability $>0.97$), compressing STAMP's dynamic range and reducing its ability to distinguish models. These cases define clear boundary conditions for STAMP: \emph{dataset-specific variance suppression} and \emph{output saturation}. Importantly, both are detectable without target-domain data, suggesting natural extensions through calibration-aware scoring or ensemble-based estimation.

\section{Supplementary Figures}\label{app:figures}

\subsection{Probit-Scale Accuracy-on-the-Line}

Figure~\ref{fig:probit} validates that \STAMP{} is statistically equivalent to the labeled NIH macro-AUROC as a predictor of OOD performance on probit scale. Steiger tests~\citep{steiger1980tests} confirm no significant difference in Pearson $r$ between \STAMP{} and NIH-AUROC on VinDr ($p{=}0.489$), CheXpert ($p{=}0.523$), and MIMIC ($p{=}0.496$). On RSNA (class-matched), \STAMP{} \emph{exceeds} NIH-AUROC ($r{=}0.852$ vs.\ $r{=}0.596$) because the 14-class source AUROC is penalized by class mismatch while class-matched STAMP is not. Crucially, high-STAMP models (blue) consistently fall \emph{above} the regression line, confirming that \STAMP{} predicts individual model deviations from the population-level NIH$\to$OOD accuracy trend, not just the aggregate trend itself (${\rhosp}_{\mathrm{resid}} {=}+0.702$ on VinDr, $+0.750$ on CheXpert, $+0.662$ on MIMIC; all $p{<}0.0001$).

\begin{figure}[!ht]
\centering
\includegraphics[width=0.95\textwidth]{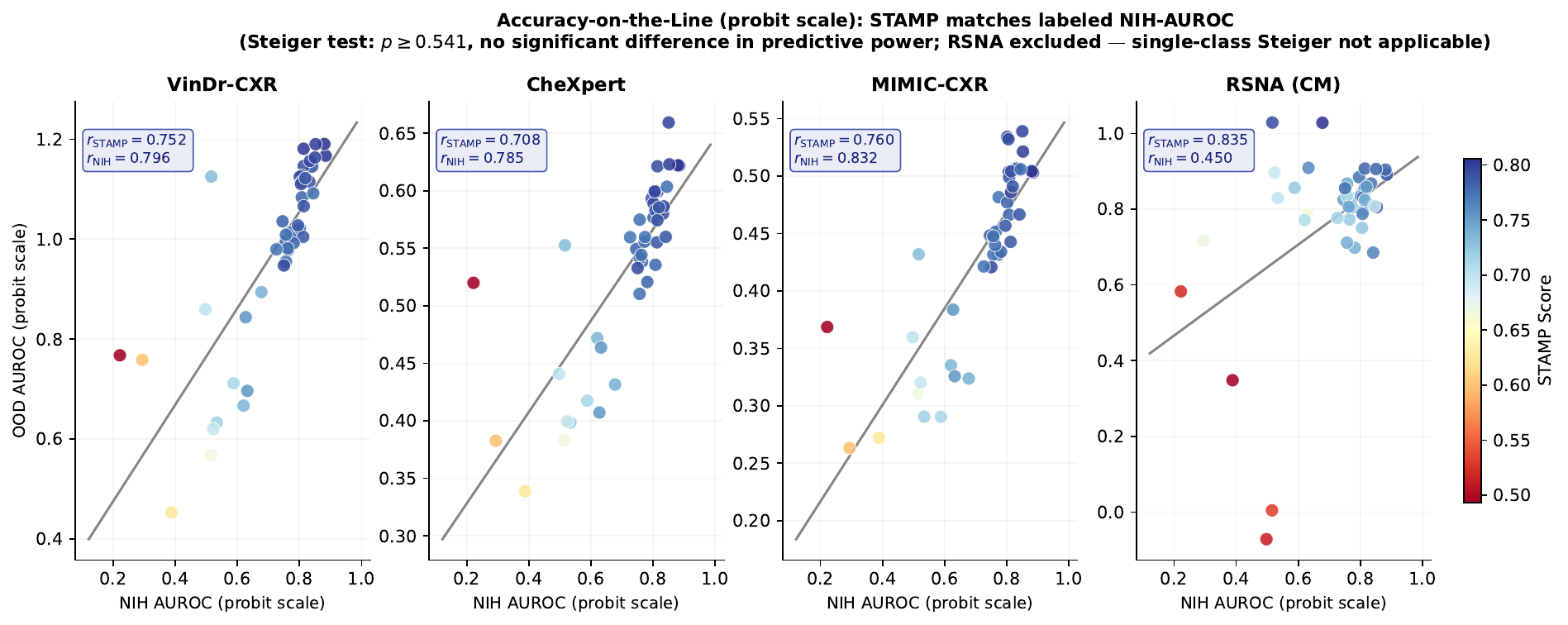}
\caption{Probit-scale accuracy-on-the-line: STAMP (color) matches labeled NIH-AUROC as a predictor of OOD AUROC across all four datasets (Steiger $p\geq0.541$; $n{=}44$ model with usable NIH labels).} \label{fig:probit}
\end{figure}

\subsection{Dollar Street: Geographic and Income-Stratified Shift}

Figure~\ref{fig:dollarstreet} reports \STAMPTS{} on Dollar Street~\citep{rojas2022dollar}, a 1{,}600-image dataset spanning monthly household incomes of \$210--\$1{,}841 across four geographic regions. \STAMPTS{} achieves $\rhosp{=}0.940$ (partial $\rhosp{=}0.804$ controlling for ImageNet-1K accuracy), outperforming the ID-accuracy baseline ($\rhosp{=}0.902$) and confirming independent predictive signal beyond in-distribution performance on a socioeconomically diverse benchmark. Income stratification reveals $\rhosp{=}0.873$ for high-income households and $\rhosp{=}0.777$ for low-income households: \STAMP{}-optimal models lift all groups proportionally but achieve larger absolute accuracy gains on higher-income households, suggesting that current architectural improvements do not differentially reduce income-stratified accuracy disparities.

\begin{figure}[!ht]
\centering
\includegraphics[width=0.85\textwidth]{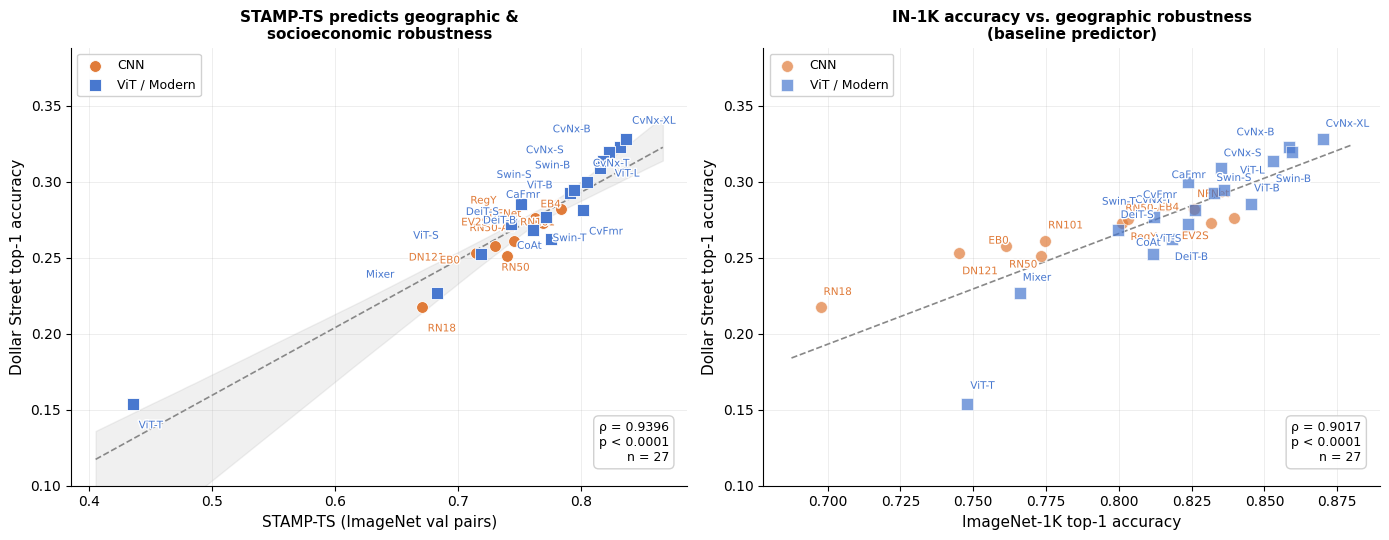}
\caption{STAMP-TS vs.\ Dollar Street top-1 accuracy (left) and vs.\ the ImageNet-1K baseline predictor (right), colored by architecture family. STAMP-TS improves over the ID-accuracy baseline ($\rho{=}0.940$ vs.\ $\rho{=}0.902$).} \label{fig:dollarstreet}
\end{figure}

\subsection{Pretraining Objective Distributional Separation}

Figure~\ref{fig:objectiveviolin} shows the full distributional view of the pretraining objective finding reported in the main paper (Table~\ref{tab:objective}). The violin and strip plots make three features visible that the summary table cannot convey. First, the CE-Supervised distribution is tightly concentrated ($\sigma{=}0.011$, range $[0.754, 0.805]$), confirming consistent behaviour across 31 architectures trained with the same objective. Second, the complete separation from SSL and VL-Contrastive groups (Cliff's $\delta{=}+1.0$, MWU $p{=}2.7\times10^{-6}$) is immediately visible as non-overlapping violin bodies. Third, TXV shows the widest spread ($\sigma{=}0.101$), driven by TXV-DN121-pc (STAMP$=0.493$, near-degenerate) and TXV-ResNet-ae, reflecting heterogeneous training strategies across TorchXRayVision models. The right panel mirrors the same grouping for VinDr OOD AUROC, confirming the STAMP ordering is not an output-space artifact but tracks genuine cross-site generalization performance.

\begin{figure}[!ht]
\centering
\includegraphics[width=0.95\textwidth]{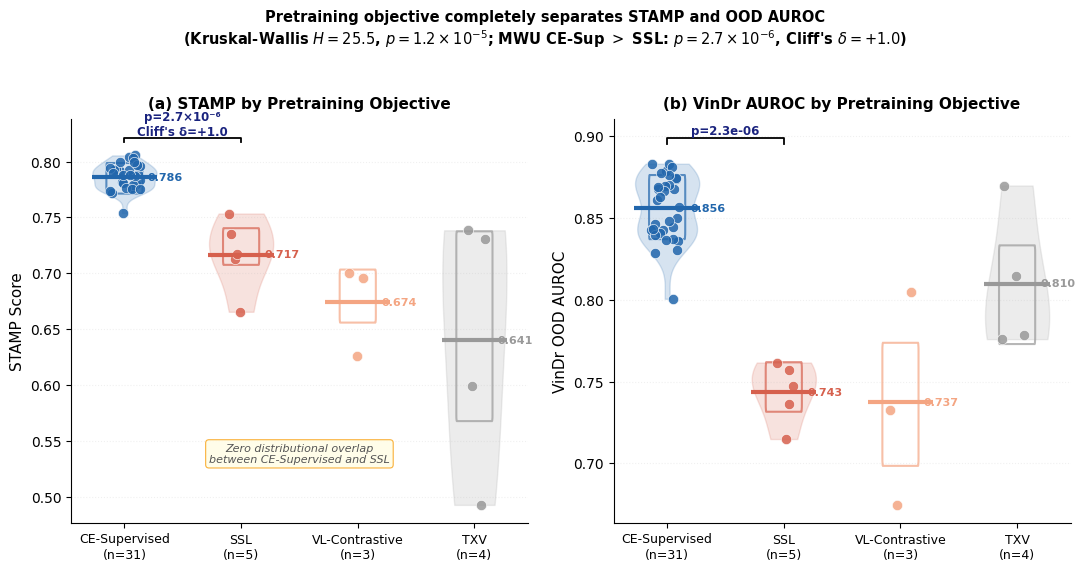}
\caption{STAMP and VinDr OOD AUROC stratified by pretraining objective, showing complete distributional separation (Cliff's $\delta{=}+1.0$) between CE-Supervised and SSL/VL-Contrastive/TXV groups.}\label{fig:objectiveviolin}
\end{figure}

\section{Extended Tables}\label{app:tables}

\subsection{STAMP Scores for All 44 Medical Models}
\label{app:all_scores}

Table~\ref{tab:stamp_main} reports STAMP scores, bootstrap 95\,\% confidence intervals (10{,}000 resamples), and the raw SV and AV components for all 44 architectures evaluated in this work. Several patterns are immediately visible. First, CE-supervised models cluster tightly in the range $[0.754, 0.805]$, with small within-group standard deviation ($\sigma{=}0.011$). Second, SSL probes (DINOv2, MAE, DINO, SAM, MoCov3) score uniformly lower ($[0.666, 0.754]$) despite using larger-capacity ViT-B backbones than many CE models, confirming that training objective rather than capacity drives the gap. Third, VL-Contrastive probes (CLIP, BiomedCLIP, PubMedCLIP) are the lowest among architecturally complete models ($[0.626, 0.701]$), consistent with the contrastive VLM objective not requiring class-discriminative output distributions for chest pathologies. Fourth, TXV models show the widest spread ($[0.493, 0.731]$), reflecting the heterogeneous training data and output-class counts used by TorchXRayVision. TXV-DN121-pc (STAMP$=0.493$) and TXV-ResNet-ae are near-degenerate in SV/AV; we retain them as informative lower anchors rather than excluding them. RAD-DINO ($0.775$) scores comparably to mid-tier CE models despite being a self-supervised radiology foundation model, likely because its linear probe is fine-tuned on NIH labels. The 95\,\% CI widths ($\le 0.027$) confirm $N{=}2{,}000$ pairs provides stable estimates consistent with our Hoeffding bound (Appendix~\ref{app:hoeffding}).

\begin{table}[!ht]
\centering
\caption{Temporal \STAMP{} scores for all 44 medical models. Bootstrap 95\,\% CIs use 10{,}000 resamples. SV and AV are the mean same-patient and cross-patient squared output distances, respectively. RAD-DINO is reported as a hybrid foundation-model checkpoint and is not assigned to the standard objective grouping in Table~\ref{tab:objective}.} \label{tab:stamp_main}
\scriptsize
\setlength{\tabcolsep}{2.5pt}
\begin{tabular}{lcccc}
\toprule
\textbf{Architecture} & \textbf{STAMP} & \textbf{95\,\% CI}
  & \textbf{SV} & \textbf{AV} \\
\midrule
\multicolumn{5}{l}{\emph{CE-Supervised fine-tuned (31 models, two-stage protocol)}} \\
ResNet-18          & 0.7764 & [0.763, 0.790] & 0.00500 & 0.02238 \\
ResNet-34          & 0.7818 & [0.769, 0.795] & 0.00465 & 0.02130 \\
ResNet-50          & 0.7802 & [0.768, 0.793] & 0.00480 & 0.02182 \\
ResNet-101         & 0.7760 & [0.762, 0.790] & 0.00493 & 0.02200 \\
ResNet-152         & 0.7728 & [0.759, 0.786] & 0.00512 & 0.02254 \\
DenseNet-121       & 0.7832 & [0.768, 0.797] & 0.00477 & 0.02199 \\
DenseNet-169       & 0.7786 & [0.765, 0.792] & 0.00487 & 0.02200 \\
DenseNet-201       & 0.7878 & [0.774, 0.801] & 0.00477 & 0.02247 \\
EfficientNet-B0    & 0.7723 & [0.758, 0.785] & 0.00485 & 0.02127 \\
EfficientNet-B4    & 0.7540 & [0.738, 0.770] & 0.00424 & 0.01724 \\
EfficientNetV2-S   & 0.7914 & [0.779, 0.804] & 0.00478 & 0.02290 \\
ConvNeXt-T         & 0.7888 & [0.777, 0.801] & 0.00470 & 0.02226 \\
ConvNeXt-S         & 0.8042 & [0.791, 0.816] & 0.00519 & 0.02652 \\
ConvNeXt-B         & 0.8054 & [0.793, 0.818] & 0.00514 & 0.02643 \\
ResNet50-CBAM      & 0.7807 & [0.768, 0.795] & 0.00508 & 0.02314 \\
ResNet50-SE        & 0.7935 & [0.780, 0.807] & 0.00414 & 0.02007 \\
ResNet50-BAM       & 0.7763 & [0.763, 0.789] & 0.00484 & 0.02163 \\
ResNet101-CBAM     & 0.7734 & [0.759, 0.787] & 0.00498 & 0.02198 \\
ViT-Tiny/16        & 0.7870 & [0.773, 0.801] & 0.00429 & 0.02012 \\
ViT-Small/16       & 0.7925 & [0.779, 0.806] & 0.00460 & 0.02219 \\
ViT-Base/16        & 0.8030 & [0.789, 0.817] & 0.00408 & 0.02070 \\
DeiT-Base/16       & 0.7997 & [0.787, 0.812] & 0.00456 & 0.02279 \\
Swin-T             & 0.7960 & [0.782, 0.809] & 0.00466 & 0.02282 \\
Swin-S             & 0.7965 & [0.783, 0.810] & 0.00473 & 0.02323 \\
Swin-B             & 0.7994 & [0.785, 0.813] & 0.00477 & 0.02378 \\
CoAtNet-0          & 0.7922 & [0.779, 0.805] & 0.00461 & 0.02220 \\
MaxViT-T           & 0.7876 & [0.773, 0.800] & 0.00448 & 0.02109 \\
MViTv2-T           & 0.7942 & [0.781, 0.807] & 0.00446 & 0.02170 \\
ConvFormer-S18     & 0.7897 & [0.775, 0.804] & 0.00459 & 0.02181 \\
CaFormer-S18       & 0.7880 & [0.774, 0.802] & 0.00463 & 0.02183 \\
MLP-Mixer-B/16     & 0.7755 & [0.763, 0.789] & 0.00517 & 0.02302 \\
\midrule
\multicolumn{5}{l}{\emph{SSL probes (5 models, frozen backbone + linear head)}} \\
DINOv2-B/14        & 0.7127 & [0.694, 0.730] & 0.00196 & 0.00681 \\
MAE-B/16           & 0.7172 & [0.697, 0.737] & 0.00241 & 0.00851 \\
DINO-B/16          & 0.6658 & [0.649, 0.682] & 0.00663 & 0.01984 \\
SAM-B/16           & 0.7354 & [0.719, 0.753] & 0.00332 & 0.01254 \\
MoCov3-B/16        & 0.7535 & [0.738, 0.769] & 0.00275 & 0.01114 \\
\midrule
\multicolumn{5}{l}{\emph{VL-Contrastive probes (3 models, frozen backbone + linear head)}} \\
CLIP-B/16          & 0.6258 & [0.603, 0.650] & 0.00238 & 0.00637 \\
BiomedCLIP-B/16    & 0.7006 & [0.683, 0.718] & 0.00685 & 0.02288 \\
PubMedCLIP-B/32    & 0.6964 & [0.675, 0.715] & 0.00268 & 0.00883 \\
\midrule
\multicolumn{5}{l}{\emph{TorchXRayVision models (4 models, published weights)}} \\
TXV-DN121-nih      & 0.5997 & [0.577, 0.621] & 0.00205 & 0.00513 \\
TXV-DN121-all      & 0.7307 & [0.713, 0.747] & 0.00127 & 0.00473 \\
TXV-DN121-pc       & 0.4932 & [0.468, 0.517] & --- & --- \\
TXV-ResNet-ae      & 0.7510 & --- & --- & --- \\
\midrule
\multicolumn{5}{l}{\emph{Radiology foundation model (ablation only)}} \\
RAD-DINO           & 0.7751 & [0.762, 0.788] & 0.00426 & 0.01892 \\
\bottomrule
\end{tabular}
\end{table}

\subsection{Temporal vs.\ Labeled STAMP: All 31 Models}\label{app:temporal_t}

Temporal \STAMP{} (consecutive same-patient pairs regardless of label change) uniformly exceeds labeled \STAMP{} across all 31 custom models ($\Delta \in [+0.110, +0.186]$, mean $+0.161$, Table~\ref{tab:temporal_delta}). The labeled variant selects pairs where the NIH disease label is unchanged, which is an \emph{easy} condition for any model that has learned NIH class statistics: SV is artificially suppressed because the pairs are chosen to minimize semantic distance by construction. Temporal pairs, by contrast, include natural within-patient variation (posture, inspiration depth, scanner settings across visits) that more faithfully tests whether the model's output is stable across \emph{real-world} intra-class variation. The larger $\Delta$ for ViT-family models ($+0.110$ to $+0.116$ for ViT-B, DeiT) versus CNN-family ($+0.165$ to $+0.186$ for ResNets) suggests CNNs are more sensitive to the NIH label noise, consistent with their lower augmentation invariance observed in the B/A ratio analysis (Section~\ref{app:layerwise}).

\begin{table}[!ht]
\centering
\caption{Temporal \STAMP{} vs.\ labeled (same-label) \STAMP{} for all 31 custom models. $\Delta{=}$Temporal$-$Labeled. $\Delta > 0$ for every model (mean $+0.161$, range $[+0.110, +0.186]$), confirming that label-noise inflation of SV in the labeled variant suppresses its predictive range. Models are sorted by temporal \STAMP{} score.} \label{tab:temporal_delta}

\scriptsize
\setlength{\tabcolsep}{3pt}
\begin{tabular}{lccc}
\toprule
\textbf{Architecture}
  & \textbf{Temporal \STAMP{}}
  & \textbf{Synthetic \STAMP{}}
  & $\boldsymbol{\Delta}$ \\
\midrule
ResNet-18 & .7764 & .9528 & +.177\\
ResNet-34 & .7818 & .9472 & +.165\\
ResNet-50 & .7802 & .9601 & +.180\\
ResNet-101 & .7760 & .9336 & +.158\\
ResNet-152 & .7728 & .9433 & +.171\\
DenseNet-121 & .7832 & .9548 & +.172\\
DenseNet-169 & .7786 & .9409 & +.162\\
DenseNet-201 & .7878 & .9724 & +.185\\
EfficientNet-B0 & .7723 & .9048 & +.133\\
EfficientNet-B4 & .7540 & .9403 & +.186\\
EfficientNetV2-S & .7914 & .9651 & +.174\\
ConvNeXt-T & .7888 & .9246 & +.136\\
ConvNeXt-S & .8042 & .9295 & +.125\\
ConvNeXt-B & .8054 & .9284 & +.123\\
ResNet50-CBAM & .7807 & .9384 & +.158\\
ResNet50-SE & .7935 & .9516 & +.158\\
ResNet50-BAM & .7763 & .9422 & +.166\\
ResNet101-CBAM & .7734 & .9244 & +.151\\
ViT-Tiny/16 & .7870 & .9424 & +.155\\
ViT-Small/16 & .7925 & .9110 & +.119\\
ViT-Base/16 & .8030 & .9124 & +.110\\
DeiT-Base/16 & .7997 & .9157 & +.116\\
Swin-T & .7960 & .9294 & +.133\\
Swin-S & .7965 & .9641 & +.168\\
Swin-B & .7994 & .9609 & +.162\\
CoAtNet-0 & .7922 & .9331 & +.141\\
MaxViT-T & .7876 & .9358 & +.148\\
MViTv2-T & .7942 & .9589 & +.165\\
ConvFormer-S18 & .7897 & .9436 & +.154\\
CaFormer-S18 & .7880 & .9271 & +.139\\
MLP-Mixer-B/16 & .7755 & .9114 & +.136\\
\midrule
\emph{All 31} & --- & --- & min $+.110$, max $+.186$\\
\bottomrule
\end{tabular}
\end{table}

\subsection{Computational Cost}
\label{app:cost}

Table~\ref{tab:cost} compares wall-clock cost per model on a single T4 GPU. \STAMP{} requires 4{,}000 source-domain images (the pair pool) per evaluation, cached once on disk and reused across all models and all OOD benchmarks. This makes the marginal cost per new model a single forward pass over 4{,}000 images (${\approx}12\,$s), with zero additional cost per additional target dataset. ATC~\citep{garg2022leveraging} is $8\times$ slower because it requires 32{,}010 target-domain images that are not available before deployment. AoTL~\citep{baek2022agreementontheline} is $282\times$ slower because it requires all $M{=}44$ models evaluated simultaneously on the same target images. The labeled NIH-AUROC reference, while fast in inference (${\approx}77\,$s), requires test labels that are by definition unavailable before deployment. \STAMP{}'s cost is \emph{independent} of target dataset size: evaluating on a 100K-image hospital cohort costs identical to a 1K-image cohort (see Appendix~\ref{app:screening} for the alarm calibration, which uses a 35\% held-out split but still requires no target labels).

\begin{table}[!ht]
\centering \caption{Wall-clock cost per model (single T4 GPU, batch size 32, FP16 inference). ``Images'' counts unique images across all passes; \STAMP{}'s 4{,}000-image pair pool is cached across all models. AoTL cost assumes $M{=}44$ models evaluated jointly on the same target images. \STAMP{} is the only method requiring zero target-domain data; all others require at least unlabeled target samples.} \label{tab:cost}
\small
\begin{tabular}{lrrc}
\toprule
\textbf{Method} & \textbf{Images} & \textbf{Time (s)} & \textbf{Target?} \\
\midrule
\STAMP{} (ours)  & 4{,}000  & $\approx$12    & No \\
ATC~\citep{garg2022leveraging}
  & 32{,}010 & $\approx$96 & Yes \\
NIH AUROC (labeled)
  & 25{,}596 & $\approx$77 & Yes $+$ labels \\
AoTL~\citep{baek2022agreementontheline} ($M{=}44$)
  & 1{,}126{,}224 & $\approx$3{,}379 & Yes $+$ multi-model \\
\bottomrule
\end{tabular}
\end{table}

\subsection{Per-Corruption Breakdown: ImageNet-C}
\label{app:inc}

Table~\ref{tab:corruption} reports \STAMPTS{} Spearman $\rhosp$ separately for each of the 15 ImageNet-C corruption types, averaged over severity levels 1--5. All 15 corruptions yield positive and significant correlations ($p{<}0.001$, permutation test), ruling out the possibility that the aggregate ImageNet-C result ($\rhosp{=}0.905$) is driven by a single corruption type. The range is $0.756$ (contrast) to $0.929$ (brightness), with a mean of $0.876$. Contrast yields the weakest correlation, consistent with its known confounding effect on image statistics that can inflate model confidence independently of semantic discriminability. Brightness is the strongest, likely because brightness variation closely mimics natural photometric shift and models with stable representations across temporal chest X-ray pairs also handle brightness-induced covariate shift well. Noise-type corruptions (gaussian, shot, impulse) are consistently strong ($0.891$--$0.892$), confirming the generality of the \STAMP{} signal across corruption categories. This table corresponds to the discussion in Section~\ref{sec:natural} where we report the aggregate result; the per-corruption breakdown is provided here to support reproducibility and granular analysis.

\begin{table}[!ht]
\centering
\caption{\STAMPTS{} Spearman $\rhosp$ per corruption type on ImageNet-C ($n{=}27$ retained models, all $p{<}0.001$, two-sided permutation test with 5{,}000 iterations). Each row reports $\rhosp$ between \STAMPTS{} and mean top-1 accuracy across severity levels 1--5 for that corruption type. Mean across corruptions: $0.876$. Min: contrast ($0.756$). Max: brightness ($0.929$).} \label{tab:corruption}

\small
\begin{tabular}{lc}
\toprule
\textbf{Corruption type} & $\rhosp$ \\
\midrule
Gaussian noise     & 0.891 \\
Shot noise         & 0.892 \\
Impulse noise      & 0.892 \\
Defocus blur       & 0.868 \\
Glass blur         & 0.813 \\
Motion blur        & 0.874 \\
Zoom blur          & 0.907 \\
Snow               & 0.927 \\
Frost              & 0.918 \\
Fog                & 0.838 \\
Brightness         & 0.929 \\
Contrast           & 0.756 \\
Elastic transform  & 0.888 \\
Pixelate           & 0.837 \\
JPEG compression   & 0.905 \\
\midrule
Mean               & 0.876 \\
Min (contrast)     & 0.756 \\
Max (brightness)   & 0.929 \\
\bottomrule
\end{tabular}
\end{table}

\subsection{Statistical Power Analysis}\label{app:power}

A common concern with model-zoo correlation studies is whether the sample sizes ($n{=}27$ natural, $n{=}44$ medical) are sufficient to detect the reported correlations reliably. Table~\ref{tab:statistical} shows that at $n{=}27$, the two-sided Spearman test has power $>0.966$ for any true $\rhosp\ge 0.7$, and power $>0.682$ even at moderate $\rhosp{=}0.5$. All our reported \STAMPTS{} correlations on natural image benchmarks fall in the range $[0.905, 0.984]$, corresponding to power $>0.998$ for all five benchmarks. At $n{=}44$, power exceeds $0.997$ for any true $\rhosp\ge 0.7$, well covering the medical OOD results. Power was computed via exact permutation-based simulation (10{,}000 Monte Carlo samples per cell) under the null hypothesis of zero correlation. Leave-one-out (LOO) stability further confirms robustness. All reported $\rhosp$ values shift by at most $0.026$ when any single model is removed (Tables~\ref{tab:main_results}--\ref{tab:natural}), and no sign or significance change occurs.

\begin{table}[!ht]
\centering
\caption{Statistical power of the two-sided Spearman rank correlation test ($\alpha{=}0.05$) as a function of sample size $n$ and true effect size $\rhosp$. Power was estimated via Monte Carlo simulation (10{,}000 draws per cell) drawing rank-correlated pairs at each $\rhosp$ value. At $n{=}27$ (natural image study) and $n{=}44$ (medical study), all reported correlations ($\rhosp \ge 0.905$ and $\rhosp \ge 0.844$ respectively) correspond to power $> 0.998$.} \label{tab:statistical}

\small
\begin{tabular}{rcccc}
\toprule
$n$ & $\rhosp{=}0.5$ & $\rhosp{=}0.7$
    & $\rhosp{=}0.8$ & $\rhosp{=}0.9$ \\
\midrule
27 & 0.682 & 0.966 & 0.998 & ${>}0.999$ \\
30 & 0.721 & 0.977 & 0.999 & ${>}0.999$ \\
44 & 0.865 & 0.997 & ${>}0.999$ & ${>}0.999$ \\
\bottomrule
\end{tabular}
\end{table}

\end{document}